\documentclass{article}
\usepackage[paper=letterpaper,centering,margin=1.0in]{geometry}

\usepackage{authblk}

\usepackage{bm}
\usepackage{xcolor}
\usepackage[ruled,vlined]{algorithm2e}

\usepackage{dsfont}
\usepackage{bbm}
\usepackage{bm}
\usepackage[mathscr]{eucal}
\usepackage{mathtools}
\usepackage[]{color}
\usepackage{tabularx}
\newcolumntype{Y}{>{\centering\arraybackslash}X}

\usepackage{caption}
\captionsetup[figure]{font=small}
\usepackage{subcaption}
\usepackage{mwe}
\usepackage{tikz}
\usetikzlibrary{positioning}
\usepackage[ruled]{algorithm2e}
\usepackage{natbib}
 \bibpunct[, ]{(}{)}{,}{a}{}{,}%

\bibliographystyle{abbrvnat}





\usepackage{enumerate}







\def\={\!=\!}

%




\usepackage[normalem]{ulem}

\newcommand{\Beginproof}{\begin{proof}}
\newcommand{\Endproof}{\end{proof}}

\usepackage{hyperref}
\hypersetup{colorlinks,citecolor=blue,urlcolor=blue,linkcolor=blue}
\usepackage{kx}

\usepackage{amssymb,amsmath,amsthm}
\usepackage{graphicx}
\newtheorem{theorem}{Theorem}
\newtheorem{corollary}{Corollary}
\newtheorem{lemma}{Lemma}
\newtheorem{proposition}{Proposition}
\DeclareMathOperator*{\argmax}{argmax}

\begin{document}

\title{Learning and Information in Stochastic Networks and Queues}

\author{Neil Walton}
\affil{University of Manchester\\
\small{\texttt{neil.walton@manchester.ac.uk}}}

	\author{Kuang Xu}
\affil{Stanford University\\
\small{\texttt{kuangxu@stanford.edu}}}

\maketitle

\abstract{
We review the role of information and learning in the stability and optimization of queueing systems. 
In recent years, techniques from supervised learning, bandit learning and reinforcement learning have been applied to queueing systems supported by increasing role of information in decision making. 
We  present observations and new results that help rationalize the application of these areas to queueing systems. 

We prove that the MaxWeight and BackPressure policies are an application of Blackwell's Approachability Theorem. This connects queueing theoretic results with adversarial learning. We then discuss the requirements of statistical learning for service parameter estimation. As an example, we show how queue size regret can be bounded when applying a perceptron algorithm to classify service. Next, we discuss the role of state information in improved decision making. Here we contrast the roles of epistemic information (information on uncertain parameters) and aleatoric information (information on an uncertain state). Finally we review recent advances in the theory of reinforcement learning and queueing, as well as, provide discussion on current research challenges. 
}

\maketitle

\section{Introduction}

This tutorial aims to summarize the role of learning and information in queueing systems. 
Over the past decade, the theory of machine learning and data science has grown rapidly. 
As a consequence, techniques from these areas have infused themselves within a variety of operations research disciplines.
The scope for applying these techniques to queueing networks is vast and important given the role of queueing networks in areas such communications, manufacturing, healthcare, supply chain and transportation.
However, it is reasonable to say that the combination of learning and queueing theory is still in its infancy, particularly when compared with other areas of operations research, such as  revenue management, where there are well-established connections between parameter estimation and decision making. 

When we control a queueing systems, stability is usually of first order concern: can we provision service in order to cope with demand. 
Performance is then the next concern. It might seem that, to control a queueing system well, we need to know the arrival rate and service rate parameters, because we need to be able to solve for an optimal policy as a mapping from the state to the control action. In many practical applications, the state may not be observable, or may be costly to observe. Further errors in estimating arrival and service parameters may also incur costs. In Section \ref{BAMW}, we look at the question of controlling a queueing system with unknown arrival rates. A classical result shows that a simple policy, called the MaxWeight policy, stabilizes the system and is agnostic to the arrival rates. We show that this policy is intimately tied to the classical result known as Blackwell Approachability, which in turn has an intimate connection with the theory of Online Convex Optimization. In Section \ref{RegretQ}, we further develop the use of online learning algorithms in queueing systems. The quality of an online learn algorithm is typically judged in-terms of its regret. We discuss how bounds on regret translate into bounds on the performance of a queueing model. This assesses the impact of estimating service on a queues performance. In Section \ref{information}, we assess the impact of state information on queueing network control. We discuss how, in two prior works \citep{spencer2014queueing,gamarnik2018delay}, increased state information can dramatically improve performance. 

Finally in Section \ref{sec:RL}, we review recent progress applying reinforcement learning to queueing systems. 
From a practical perspective, it is often possible to apply existing techniques, from reinforcement learning, to a queueing system. Indeed, as we will review, some of the earliest examples applying Q-learning with neural networks are for queueing systems. 
However from a theoretical perspective, the assumptions of these learning frameworks often do not naturally match with the assumptions of a queueing theoretic model. 
For instance, in reinforcement learning theory, it is common to assume a finite state-space and bounded rewards which are assumptions that often do not hold for a queueing systems (see Section \ref{RLChall} for further discussion). 
Moreover, in instances where assumptions do hold, estimation, modeling and optimization tasks of can often be treated separately. For instance, parameters of a queueing network, such as arrival rates and routing probabilities, can be estimated statistically. These can be applied to a queueing formula to estimate stationary queue lengths for a fixed capacity allocation. Given this formula service capacity can then be allocated to optimize to queue lengths. 
Nonetheless, there is a great deal of potential for reinforcement learning techniques to be applied to a queueing systems. 
Moreover, the analysis of queueing systems could greatly benefit from the more exploratory research approach taken in reinforcement learning (see \cite{dietterich1990editorial}). From this there are
a growing number of practical and important theoretical works emerging in this area which we highlight. 
%
%

With this noted, the aim of this article is to introduce the reader to a few techniques and insights which we believe are important for understanding the role of learning and information in queueing systems, as well as potential challenges. 
%


\subsection{Overview.}

We provide a more detailed overview of the results, discussions and approaches taken in each section of this article.

\subsubsection*{Blackwell Approachability and MaxWeight.}
In Section \ref{BAMW}, we show that a well-known stochastic network scheduling algorithms is in fact a learning algorithm. Specifically, we analyze the MaxWeight scheduling algorithm in switched queueing networks. Switched queueing networks are a discrete-time queueing network model where there are constraints on which queues can be served simultaneously (Section \ref{subsec:switch}). Switched queueing networks are, for instance, used to model internet routers, wireless communication systems, data centers, and manufacturing systems. In a switch queueing network, a scheduling algorithm decides which queues are serve at each time step. In this setting, the MaxWeight algorithm is a celebrated scheduling algorithm. MaxWeight has the desirable property that it has a maximal stability region (Theorem \ref{MW:thrm}). This means that, regardless of the arrival rate of jobs, if there is a way to stabilize the network then MaxWeight will stabilize the network. Importantly MaxWeight achieves this with knowledge of the underlying arrival rates. 

We establish the connection between MaxWeight and Blackwell's Approachability Theorem. 
 Blackwell's Approachability Theorem is a classical result in game theory, learning and control. 
Blackwell's result provides the basis for a broad class of online learning algorithms (see \cite{abernethy2011blackwell}). 
\cite{blackwell1954controlled} considers a game between a player and an adversary. The player receives rewards as a vector which depend on its adversaries actions. The task of the player is to make a sequence of decisions so that their average payoff converges towards a convex set, so-called Approachability. 
Blackwell provides a necessary and sufficient condition for approachability as well as an algorithm for the player to approach any convex set. 
In Proposition \ref{MW:Prop}, we show that the maximal stability of MaxWeight can be seen as an instance of approachability in a game between a player who provides service and an adversary who selects arrivals.
This established a direct connection between two classical results in learning and queueing network control.
We note, however, that adversarial service cannot be easily be represented within this framework. This suggests that separate mechanisms are required to learn service parameters. We discuss this in Section \ref{RegretQ}. 

%
%

\subsubsection*{Online Learning and Queues.} In Section \ref{RegretQ}, we provide a simple example that assesses the impact of parameter learning on the performance a queue.  A number of papers over recent years have investigated the role of learning and regret in queueing. 
Here we consider a model of a single server queue who must sequentially make service decisions which effect the rate of service. 
These service rates are unknown and so
the aggregate performance of different service decisions must be predicted and optimized by an online algorithm. 
The performance of the online learning algorithm is typically measured in terms of its \emph{regret}. The regret is the difference between the aggregate performance of the algorithm compared to aggregate performance of the best decision made in hindsight. An algorithm is deemed to have a good performance if its regret approaches zero at a fast rate.
The existence regret minimizing algorithms is a consequence of Blackwell's Approachability Theorem (Theorem \ref{Hannan}) and for this reason the theorem is considered a foundational result in online learning. 

In Section \ref{RegretQ}, we provide a simple result that applies Lindley's recursion for the G/G/1 queue in order to express queueing size in-terms of the regret of the algorithm used to select service.
In this way we can directly apply the regret analysis of a service rule in order to understand the impact of the algorithm on queue sizes. 
We apply this result for a service rule using the Perceptron Algorithm. The Perceptron Algorithm is a classical algorithm in supervised learning with allows for a particularly tractable analysis in our setting. We prove the Perceptron Mistake Bound (Theorem \ref{PMB}) and apply it to prove an optimality result learning a queue's service (Theorem \ref{QOpt}).

\subsubsection*{Information and Queues.} 
While Section \ref{RegretQ} demonstrates how improved parameter estimation can improve performance, Section \ref{information} discusses how increasing state information can often be more effective. 
For instance, an approximate estimate of the arrival times of customers at a queue over the next 2 minutes may provide more relevant information than a highly accurate estimate of the mean arrival rate over a day. 
Following \cite{lu2021reinforcement}, we categorize information as epistemic information and
 aleatoric information. Epistemic being information on underlying parameters and aleatoric information being information on the underlying system state. 
We review two sets of queueing theory results where the benefits of increased (aleatoric) state information can quantifiably improve performance. 
First we consider future information can assist in reducing delay when making admission decisions at a queue \citep{stidham1985optimal,xu2015necessity} . 
Here we see that there is a sharp cut off between an information-rich regime, which leads to marked queue size reduction, and an information-scarce regime, where performance is comparable with the best epistemic policy, i.e. the best policy that only knows parameters. 
Next we consider a parallel server model from \cite{gamarnik1998stability}. 
Here we review the impacts of limited memory and limited communication. It is found that there are regimes which depend on the availability of memory and rate of messaging which, when sufficiently large, result in zero delay to arriving jobs.  
Again we see that ensuring sufficient state information can have a pronounced effect on performance. 

There is often a natural Markovian description of a queueing system for applying performance analysis.
However, these results emphasize that the state of a queueing system should not be taken as given but instead is an important consideration when designing optimization and learning techniques. 

\subsubsection*{Reinforcement learning and Queues.} 
Finally, in Section \ref{sec:RL}, we review recent progress on the application of reinforcement learning theory to queueing systems. 
We note that many early examples of reinforcement learning (including early deep reinforcement learning) have been applied to scheduling problems in queueing systems. However, while it is possible to apply reinforcement learning algorithms to queueing systems in a relatively straightforward manner, we note that many of the theoretical assumptions that guarantee convergence and correctness in reinforcement learning tend not to hold in the case of a queueing system. 
We itemize and discuss these issues. 
We identify key areas where more theoretical development is required. We then provide a review of recent literature that strives to address this challenge. This includes a number of recent work at the Reinforcement Learning in Networks and Queues workshop at the 2021 ACM Sigmetrics conference. 

While applications of reinforcement learning to queueing systems are still very nascent, our aim is that this tutorial provides an introduction that is both instructive but also acts as a guide on recent literature and potential themes of future research.

%
%

\subsection{Organization.}
The remainder of the paper is organized as follows. 
In Section \ref{BAMW} we prove the relationship between the MaxWeight policy and Blackwell's Approachability Theorem.
In Section \ref{RegretQ}, we analyse the regret of queueing systems when service rates are unknown. 
In Section \ref{information}, we discuss the use of detailed state information and its impact on performance. 
In Section \ref{sec:RL}, we review the recent progress applying reinforcement learning theory to queueing systems. 

%
%
%

\section{Blackwell Approachability and MaxWeight.}\label{BAMW}

In this section, we relate classical approaches to learning, control and queueing. Specifically, we discuss connections between Blackwell's Approachability Theorem and the MaxWeight policy. 

We start by presenting Blackwell's Approachability Theorem (Theorem \ref{Blackwell:Blackwell}). As we will discuss in more detail shortly. Blackwell's Approachability Theorem gives conditions for the time average value of a controlled random walk to approach a convex set in the setting where the random walk may be perturbed adversarially (see Section \ref{vvgame}). The result is considered to be foundational in online convex optimization, as a number of algorithms and approaches can viewed as consequence of Blackwell's result. Specifically \cite{blackwell1956analog}, proves a result on the regret of online algorithms called the Hannan-Gaddum Theorem is a corollary of his theory (Theorem \ref{Hannan}). 
The regret of an online algorithm is difference between the reward of the algorithm and the reward best fixed parameter choice in hindsight (cf. \eqref{Regret}). 

We then introduce the MaxWeight policy and prove its stability for subcritical queueing networks. Finally, we prove that the stability of MaxWeight can be viewed as a consequence of Blackwell's Approachability Theorem.


\subsection{Blackwell Approachability}

Blackwell's Approachability Theorem is a foundational result in game theory, learning and control. The result of \cite{blackwell1956analog} gives a necessary and sufficient condition for the mean of a vector payoff to approach  a convex set under adversarial perturbations. 
The result is a generalization of the Minimax Theorem for two-person zero-sum games.
In a subsequent note, \cite{blackwell1954controlled} observed that this approachability property can be used to prove sub-linear regret of sequential decision policies. 
Below we present the model setting and then state Blackwell's Approachability Theorem; a proof of this result is provided in the appendix.


\subsubsection{The Minimax Theorem.}
Blackwell's Approachability Theorem is an extension of the Minimax Theorem, see \cite{morgenstern1953theory}. 
So we start by briefly discussing the Minimax Theorem. The Minimax Theorem applies to two-person zero-sum games, here you make a decision $d\in \mathcal D$ and an adversary makes a decision $a\in \mathcal A$. (Here $\mathcal D$ and $\mathcal A$ are closed, bounded, convex subsets of $\mathbb R^p$.) For decision $d$ and adversary decision $a$, you receive a reward $d^\top R a$ and the adversary receives a cost of $d^\top R a$, where $R$ is a $p\times p$ real-valued matrix. Given that you want to maximize reward whiles the adversary wishes to minimize costs, 
the Minimax Theorem states that
\begin{align*}
  \min_{a \in \mathcal A} \max_{d \in \mathcal D} d^\top R a = \max_{d \in \mathcal D} \min_{a \in \mathcal A} d^\top R a\,.
\end{align*}
The value, $v$, of the expression above is called the value of the game. Notice the inequality $\min_{a \in \mathcal A} \max_{d \in \mathcal D} d^\top R a  \geq v$ states that if we know $a$ then we can make a decision that gets at least value $v$. While $\max_{d \in \mathcal D} \min_{a \in \mathcal A} d^\top R a  \geq v$ gives the seemly stronger statement that there exists a decision the regardless of $a$ we can get at least value $v$. In Blackwell's terminology this means that the decision maker can approach the set $[v,\infty)$. Blackwell's Approachability Theorem generalizes this notion to a game with a vector of rewards, and the notion of approachability to convex sets. 

\subsubsection{A vector valued game.}

\label{vvgame}
At time $t\in \mathbb N$, 
a player makes a decision $d(t) \in \mathcal D$ and its adversary makes a decision $a(t)\in \mathcal A$. (Here $\mathcal D$ is a closed, bounded, convex subset of $\mathbb R^p$ and $\mathcal A$ is closed, bounded, convex subset of $\mathbb R^p$.) From this an $n$-dimensional vector of payoff is made
\begin{align}\label{VecPayoff}
  R(d(t),a(t)) = \sum_{i=1}^p \sum^q_{j=1} d_i(t)R_{ij} a_j(t) 
\end{align}
where, for each $i$ and $j$, $R_{ij}\in \mathbb R^n$. We let $R_{\max} \geq \max_{ij} ||R_{ij}||_2$.
Decisions from the player and adversary at each time may be a function of previous decisions and payoffs. We let $\mathcal F_{t}$ denote the decisions and payoffs up until time $t$.  We allow $d(t)$ and $a(t)$ to be random but must be conditionally independent given the past payoffs and decisions, $\mathcal F_{t-1}$. We let $\bar a(t)= \mathbb E [ a(t) | \mathcal F_{t-1}]$ and $\bar d(t)= \mathbb E [ d(t) | \mathcal F_{t-1}]$.
Starting from some initial position $Q(0) \in \mathbb R^q$, the average payoff vector is
\begin{align}\label{Qbar}
   \bar Q(t)=\frac{1}{t}\left[Q(0)+\sum_{s=1}^t
R(d(s),a(s))\right].
\end{align}
Without loss of generality, we assume that $Q(0)$ is chosen so that $||Q(0)||_2 \leq R_{\max}$. 

The task of the player is to make a sequence of decisions $\{ d(t) \}_{t=1}^\infty$ such that
the mean payoff $\bar Q(t)$ converges towards a closed, convex set $\mathcal Z \subset \mathbb R^n$, regardless of decisions of the adversary $\{ a(t) \}_{t=1}^\infty$ . We say that the set $\mathcal Z$ is \emph{approachable} if this convergence is possible regardless of the sequence of decisions made by the adversary. Specifically, for all sequences $\{a(t) \}_{t=1}^\infty$ there exists a sequence $\{ d(t) \}_{t=0}^\infty$ such that
\begin{align*}
  D(\bar Q(t) , \mathcal Z) \xrightarrow[t\rightarrow \infty]{} 0\, ,
\end{align*}
where $D(\bar Q,\mathcal Z)$ gives the $L^2$-norm of the distance between $\bar Q$ and $\mathcal Z$:
\[
D(\bar Q , \mathcal Z) := \sqrt{ \mathbb E \Big[ \min_{z \in \mathcal Z} || \bar Q - z ||^2\Big]} \, .
\]


\subsubsection{Blackwell's Approachability Theorem.}
Blackwell's Approachability Theorem provides necessary and sufficient conditions for approachability to hold.

\begin{theorem}[Blackwell's Approachability Theorem]\label{Blackwell:Blackwell} The following are
equivalent
\begin{enumerate}
\item ${\mathcal Z}$ is approachable.
\item For every $q$ there exists $p$ such that $R(p,q)\in {\mathcal Z}$.
\item Every half-space containing ${\mathcal Z}$ is approachable.
\end{enumerate}
\end{theorem}

We focus on the equivalence between Parts 1 and 3, above, and, for reasons of brevity, we assume that the equivalence between Parts 1 and 2 has already been shown. We note that approachability of half-spaces can be shown to be equivalent to the classical Minimax Theorem. Further we note that Blackwell's proof that Part 3 implies Part 1 is constructive, and the algorithm applied is as follows.

We define $ P(t)$ to be the projection with respect to the Euclidean norm of $\bar Q(t)$ onto $\mathcal Z$. We let 
\begin{align}
	\hat n(t) &:= \bar Q(t)-   P(t)
\label{normal}
\\
v(t) &:= P(t) \cdot (\bar Q(t)-   P(t))\, .
\label{v}
\end{align}
This defines a hyperplane 
\begin{equation}\label{hyperplane}
\mathcal H(t) := \{ r \in \mathbb R^n : \hat n(t) \cdot r \leq v(t) \}\,.
\end{equation}
Note that $\mathcal Z$ is contained in $\mathcal H$. 
Blackwell proposes to choose a random variable $d(t)$
\begin{equation}\label{dRa}
\mathbb E [ R( d(t),a) ]\in \mathcal H(t), \qquad \forall a \in\mathcal A \, .
\end{equation}
The existence of this choice $d(t)$ is not obvious at this point, but this will follow as a consequence of the Minimax theorem. 

\begin{figure}[hbt]
\centering
  \includegraphics[width=0.6\textwidth]{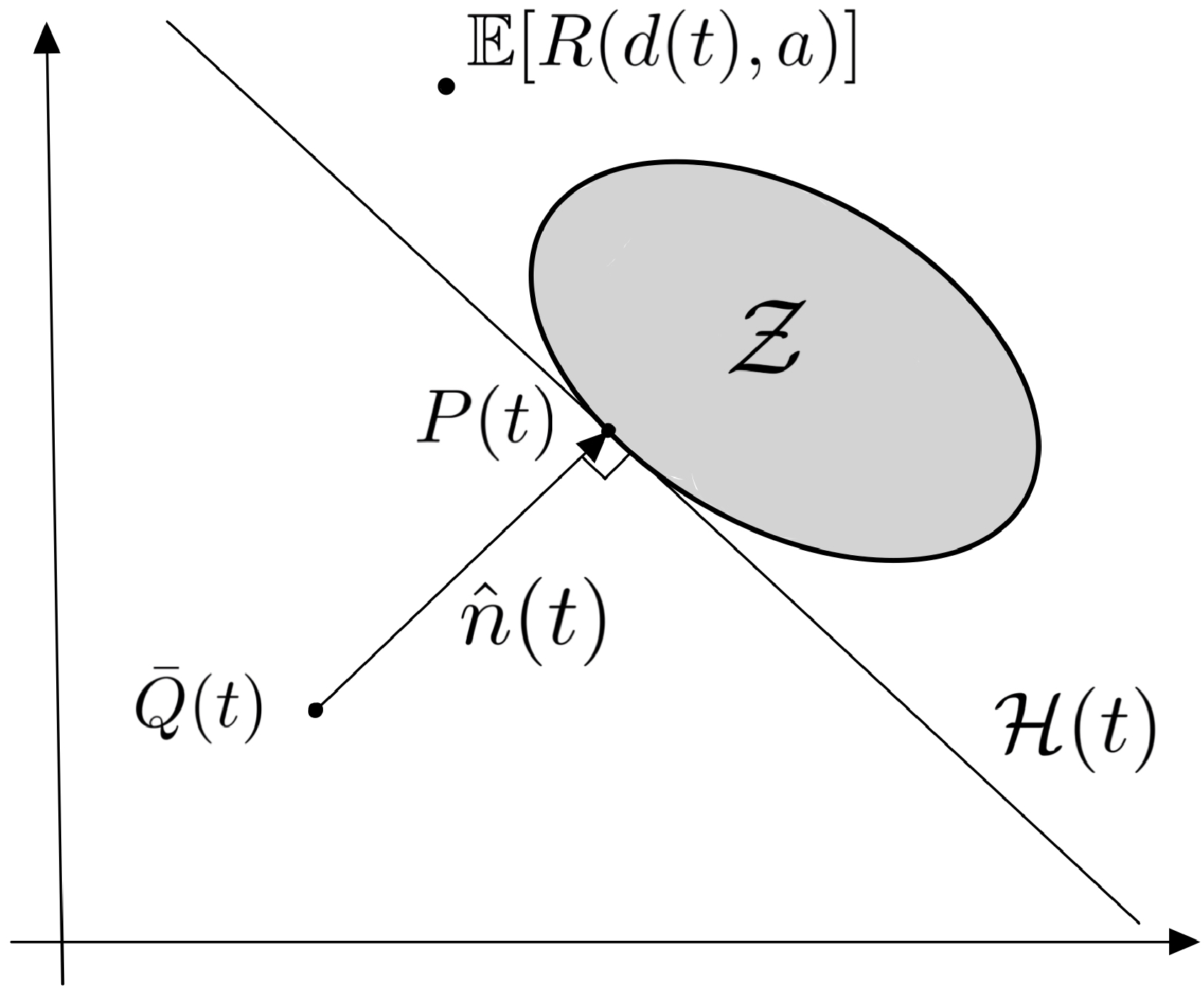}
  \caption{Blackwell's Approachability Theorem: for $\mathcal Z$ to be approachable, we require $\mathbb E [ R( d(t),a) ]$ to belong to the hyperplane $\mathcal H(t)$ for each $a\in \mathcal A$. Here $\mathcal H(t)$ is the hyperplane constructed from projecting $\bar Q(t)$ onto $\mathcal Z$.}
\end{figure}


\subsubsection{Sublinear Regret as an application of Blackwell Approachability.}\label{SubLinReg}
The regret minimization framework is by now standard for the analysis of online learning algorithms, online convex optimization and sequential decision making. 
Again here there are decisions that can be made over time and that receive rewards (or losses). 
The regret, which we define below, is measure the performance of a sequence of decisions against the best fixed decision. Literally, how much we regret not making the best decision in hindsight. 
A ``good" algorithm should have regret that grows sub-linearly with time.
This essentially implies that the algorithm is able asymptotically learn the best decision. 
The existence of sub-linear regret algorithms is a consequence of approachability, see \cite{blackwell1954controlled}. This connects Blackwell's Approachability Theorem with a rich stream of literature that encompasses bandit algorithms (see \cite{lattimore2020bandit}) and modern neural network optimizers like AdaGrad (see \cite{hazan2019introduction}). 

To review Blackwell's result here, we consider the following setting:
again, a player makes a decision $d(t) \in \mathcal D$ and its adversary makes a decision $a(t)\in \mathcal A$. The decision set $\mathcal D$ is assumed to be the set of probability distributions on actions  indexed by $i=1,...,p$. The player receives a reward
 $r(i,a(t))$ for choosing index $i=1,...,p$. Consequently, the reward for distribution $d(t)$ is \[r(d(t),a(t)) :=\sum_{i=1}^p d_i(t) r(i,a(t)).\] 
The regret of the sequence $\{ d(t) \}_{t=1}^T$ is 
\begin{align}\label{Regret}
  \mathcal R\!g(T) := \max_{i=1,...,p}\sum_{t=1}^T r(i,a(t)) - \sum_{t=1}^T r(d(t),a(t))   \, .
\end{align}
A good algorithm should have low regret regardless of the sequence $\{ a(t) \}_{t=1}^T$. Notice that if $\mathcal R\!g(T) \leq 0$ holds, then the performance of the algorithm is at least as good as the best fixed choice in $\mathcal D$. Further if $\limsup_{T\rightarrow\infty} \mathcal R\!g(T)/T \leq 0$ then we can see that the algorithm is in effect learning the best action. Such algorithms are referred to as being Hannan consistent, see \cite{cesa2006prediction}.

 One interesting consequence of Blackwell's Approachability Theorem is that it can be used to construct an algorithm that is Hannan consistent.
\begin{theorem}[Hannan-Gaddum Theorem]
\label{Hannan}
There exists a playing strategy $\{d_t\}_{t=1}^\infty$ such that for any $\{a_t\}_{t=1}^\infty$
\begin{equation}\label{blackwell:regret}
\limsup_{T \rightarrow \infty}\; \mathbb E \left[ \frac{\mathcal R\!g(T)}{T} \right] \leq 0. 
\end{equation}
In other words, our performance in the game is asymptotically as good as the best fixed action.
\end{theorem}

The result above shows that regardless of the choices of the adversary,  there are always ways to learn the best fixed decision. A formal proof of this result is given in the appendix. However, the result holds essentially as follows. The regret will go to zero if each component of the following regret vector approaches zero. Specifically, if we define
\begin{equation*}
\textstyle
	\overset{\rightarrow}{\mathcal R\! g}(T) = \left( \sum_{t=1}^T r(i,a(t)) - \sum_{t=1}^T r(d(t),a(t)) : i = 1,...,p\right)
\end{equation*}
and $\mathcal Z$ is the negative orthant. I.e. 
$
  D(\overset{\rightarrow}{\mathcal R\! g}(T), \mathcal Z) \rightarrow 0
$ as $T\rightarrow \infty$.
For this problem, $R(d,a) = (r(i,a) - r(d,a): i=1,...,p)$.
Notice that for any decision by the adversary $a$ we can chose a decision $d$ so that $R(d,a)\in \mathcal Z$. This verifies condition (2) in Blackwell's Approachability Theorem and so the result holds. See Figure \ref{fig:test} for an illustration of this proof.


%

\begin{figure}
\centering
\begin{subfigure}{.5\textwidth}
  \centering
  \includegraphics[width=1.02\linewidth]{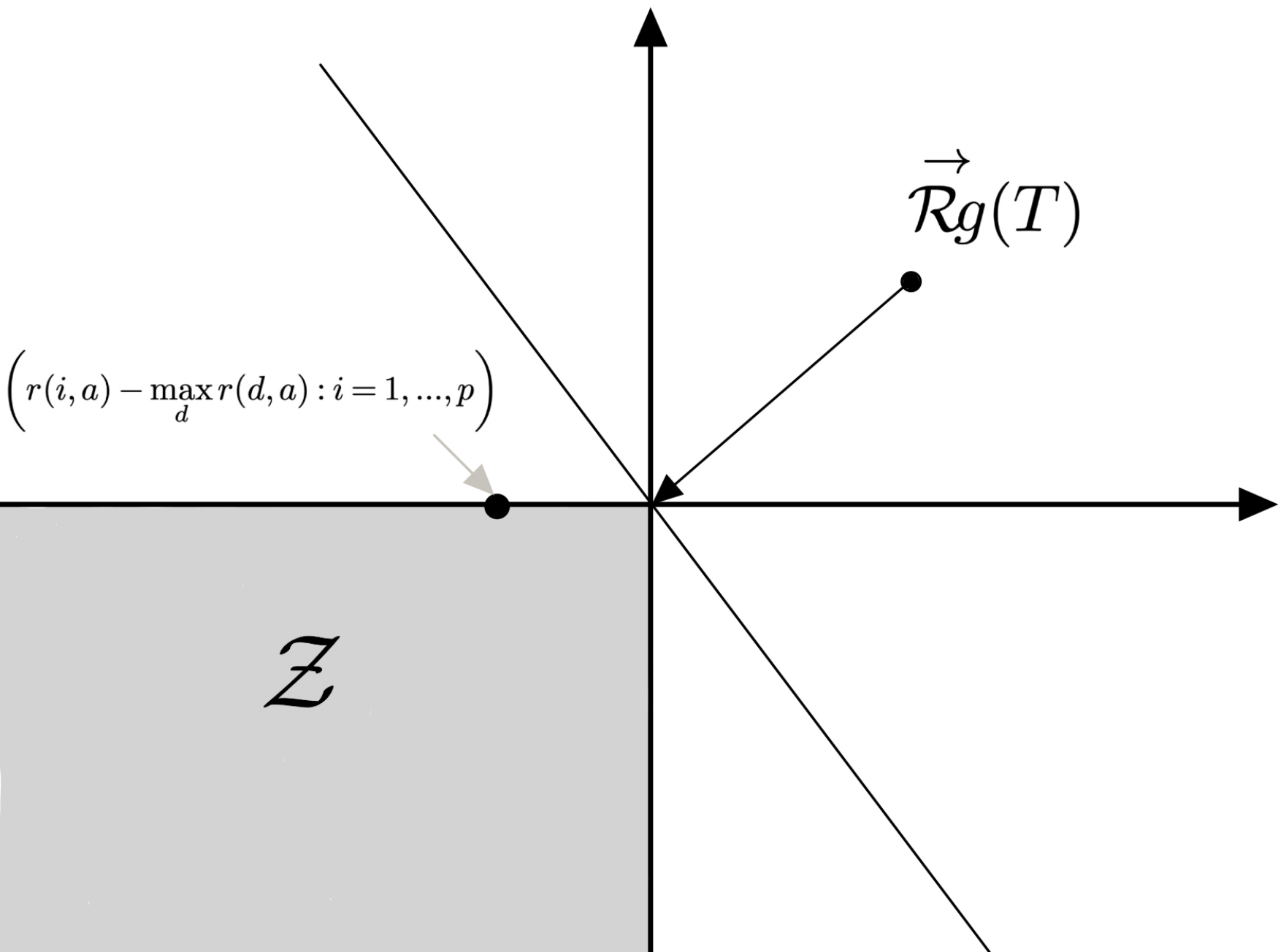}
  \caption{Approachability for Regret}
  \label{fig:sub1}
\end{subfigure}%
\begin{subfigure}{.5\textwidth}
  \centering
  \includegraphics[width=.88\linewidth]{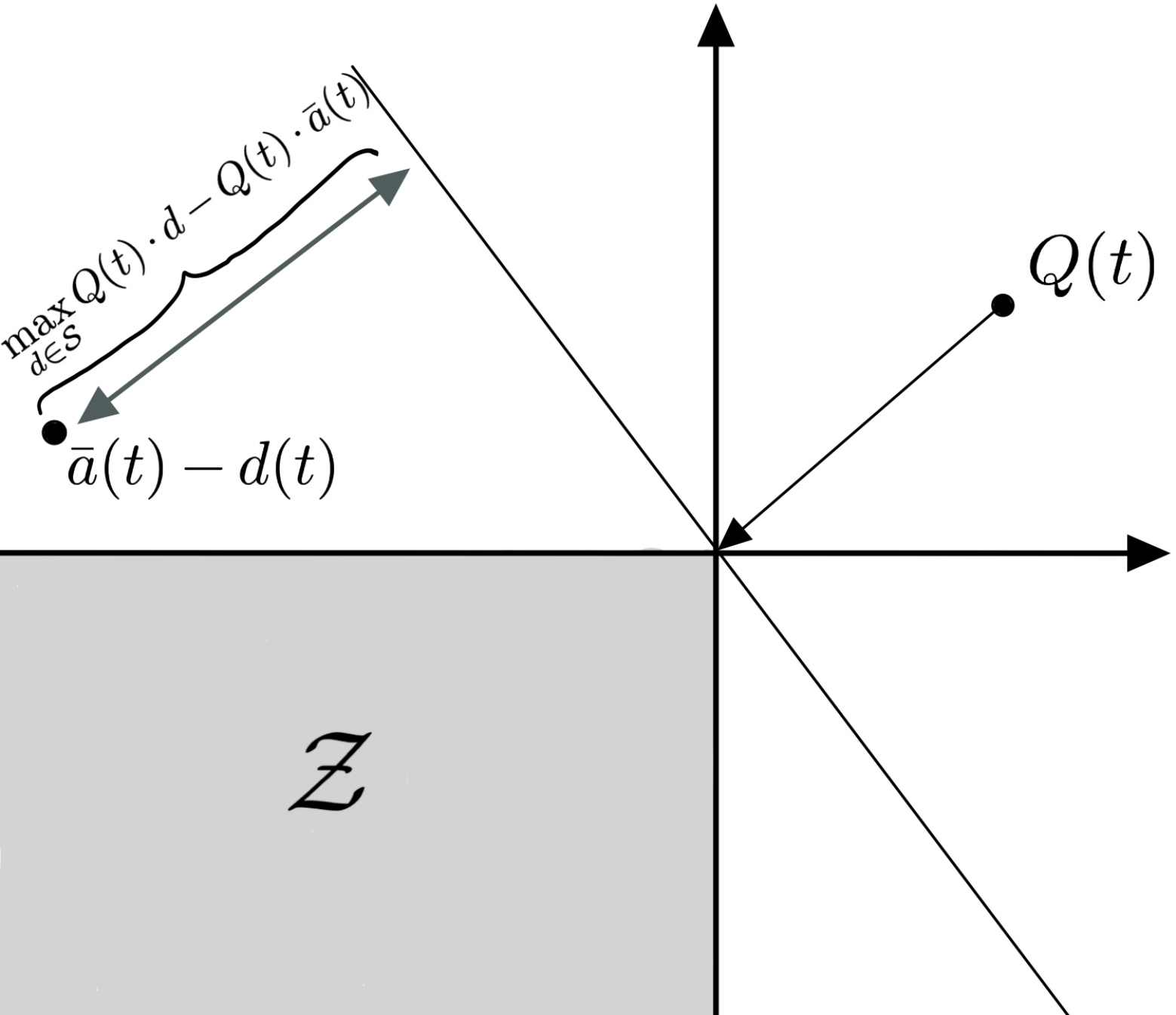}
  \caption{Approachability for Queues}
  \label{fig:sub2}
\end{subfigure}
\caption{(a) Approachability for Regret : Here $\mathcal Z$ is the negative orthant. Since for each $a$ there is a decision $d$ (specifically, the maximizer of $r(d,a)$) such that reward difference belongs to $\mathcal Z$. Thus, the set $\mathcal Z$ is approachable.
(b) Approachability for Queues: Since for each $a$ there is a decision $d$ (specifically, the maximizer of $Q(t)\cdot d$) such that reward difference belongs is to the right of the hyperplane with normal $Q(t)$.}
\label{fig:test}
\end{figure}
%

\subsection{Stability and Approachability in MaxWeight} 

A classical approach to scheduling in queueing networks is the MaxWeight algorithm, which was first introduced in the seminal work of \cite{tassiulas1990stability}. In that paper, it is proved that the MaxWeight scheduling algorithm has the property of stabilizing a queueing network whenever it is possible to construct a stabilizing policy -- a statement that is somewhat similar to the statement of Blackwell's result. Moreover, \cite{neely2008fairness} extends the framework to the optimization of different utility function objectives and \cite{neely2010universal} extends the framework to adversarial arrivals.
Below we discuss how MaxWeight can be interpreted as special case of Blackwell's policy \eqref{dRa}. Thus the desirable stability properties of MaxWeight can be seen as a consequence of approachability.

Below, we present the queueing network setting and the MaxWeight policy. Afterwards we present the main stability result for MaxWeight; a proof of the result is provided in the appendix. Following this, we make the connection with Blackwell's Approachability Theorem. 

\subsubsection{Switched Queueing Network and MaxWeight.}\label{subsec:switch}
We consider a discrete time queueing network, called a switched queueing network. In this model, there are constraints on which queues can be served simultaneously. 
%
We let $j=1,...,q$ index the set of queues. At time $t \in\mathbb Z_+$, the number of arrivals to each queue is given by the vector $a(t) = (a_j(t): j =1,...,q) \in \mathbb Z_+^q$. For simplicity, we suppose that the components above are bounded by some maximum value $a_{\max}$. The vector $a(t)$ may be random, in which case, we let $\bar a(t)=(\bar a_j(t): j = 1,...,q)$ be the expectation of $a(t)$.
We let $d(t) = (d_j(t) : j=1,...,q)$ be the number of departures for each queue at time $t$.
We suppose that $d(t)$ belongs to some set of schedules $\mathcal S \subset \{0,1\}^q$. \footnote{Many of the arguments discussed below likely extend to the case $\mathcal S \subset Z_+^q$. However, the assumption $\mathcal S \subset \{0,1\}^q$ simplifies our arguments when a scheduling decision empties a queue.}
Again we suppose that the elements of $d(t)$ are bounded by some maximum value $d_{\max}$ and that each queue can be served by some element $d$ in $\mathcal S$. Further, we suppose that the set of schedules $\mathcal S$ is monotone, meaning that for any $\sigma \in\mathcal S$ and $\sigma'\in \mathbb Z_+^q$ if  $\sigma' \leq \sigma$ component-wise then $\sigma' \in \mathcal S$. We let $<\mathcal S>$ be the convex closure of $\mathcal S$ and we let $<\mathcal S>^\circ$ be the interior of $<\mathcal S>$. 

From some initial queue size vector $Q(0) = (Q_j(0) : j=1,..,q)$, we can define the queueing process 
\begin{align}\label{Qdynamic}
  Q_j(t+1) = Q_j(t) + a_j(t) - d_j(t), \qquad j=1,...,q\, ,
\end{align}
for $t \in \mathbb Z_+$.
(Note by our monotonicity property on $\mathcal S$, we may assume that $d_j(t)=0$ whenever $Q_j(t)=0$ and more generally that $d_j(t) \leq Q_j(t)$ so that queues remain positive.)

The MaxWeight policy is a well-known scheduling rule for choosing the departure vector $d(t)$ among the set of schedules $\mathcal S$. Specifically, at time $t$, MaxWeight chooses 
\begin{align}\label{MW}
  d(t) \in \argmax_{\substack{\sigma \in \mathcal S}} \;\;\ \sum_{j=1}^q Q_j(t) \sigma_j \, . 
\end{align}
We assume that if a queue is empty, i.e. $Q_j(t)=0$, then $d(t)$ is chosen above so that $d_j(t) = 0$. Any other ties may be broken arbitrarily.
Note that queues always remain non-negative under MaxWeight. Since the scheduling decision is a function of the current queues sizes, the resulting queue size process under the recursion \eqref{Qdynamic} is a Markov chain. Thus we can discuss the positive recurrence and transience of this Markov chain. When the queueing network is positive recurrent then we often refer to the network being stable, whereas a transient network is referred to as unstable.

\begin{figure}
\centering
\begin{subfigure}{.5\textwidth}
  \centering
  \includegraphics[height=0.95\linewidth]{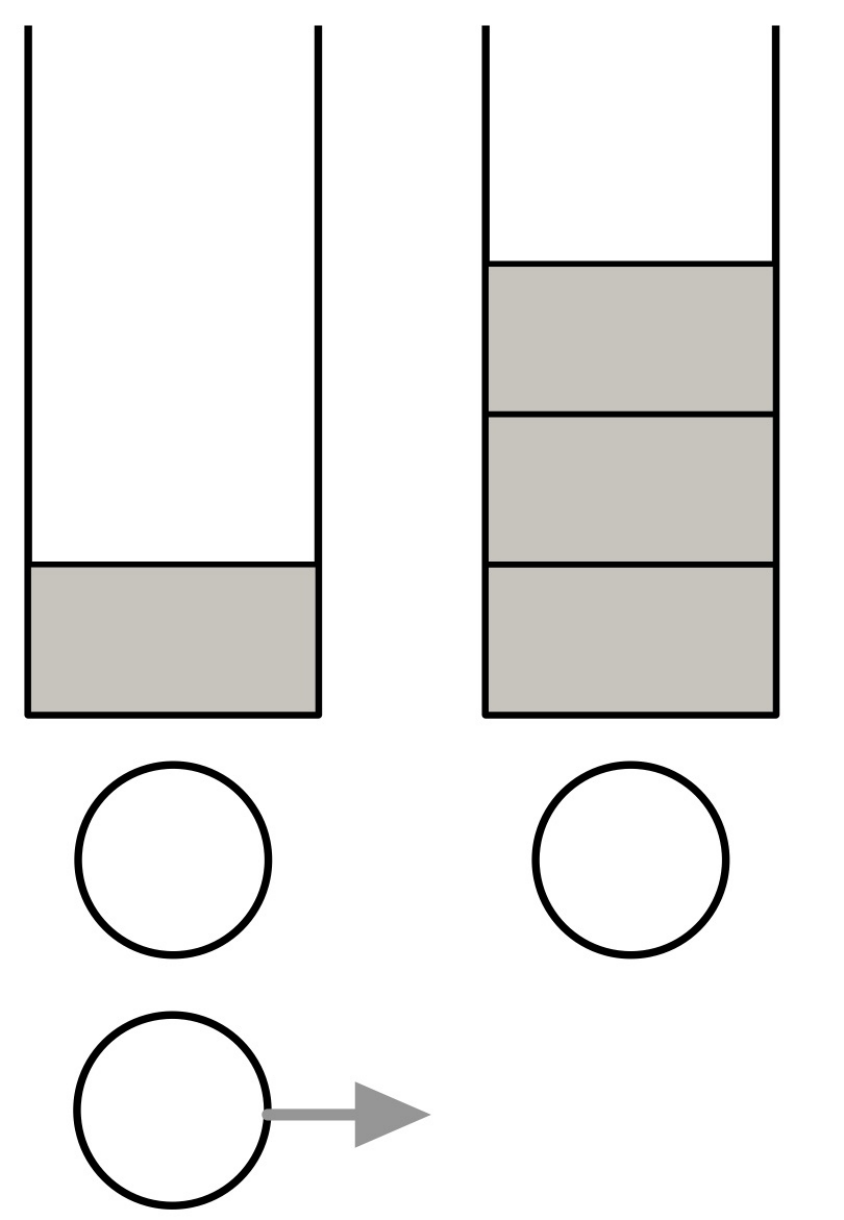}
  \caption{Two queues with an additional flexible server}
  \label{fig:sub1}
\end{subfigure}%
\begin{subfigure}{.5\textwidth}
  \centering
  \includegraphics[height=.95\linewidth]{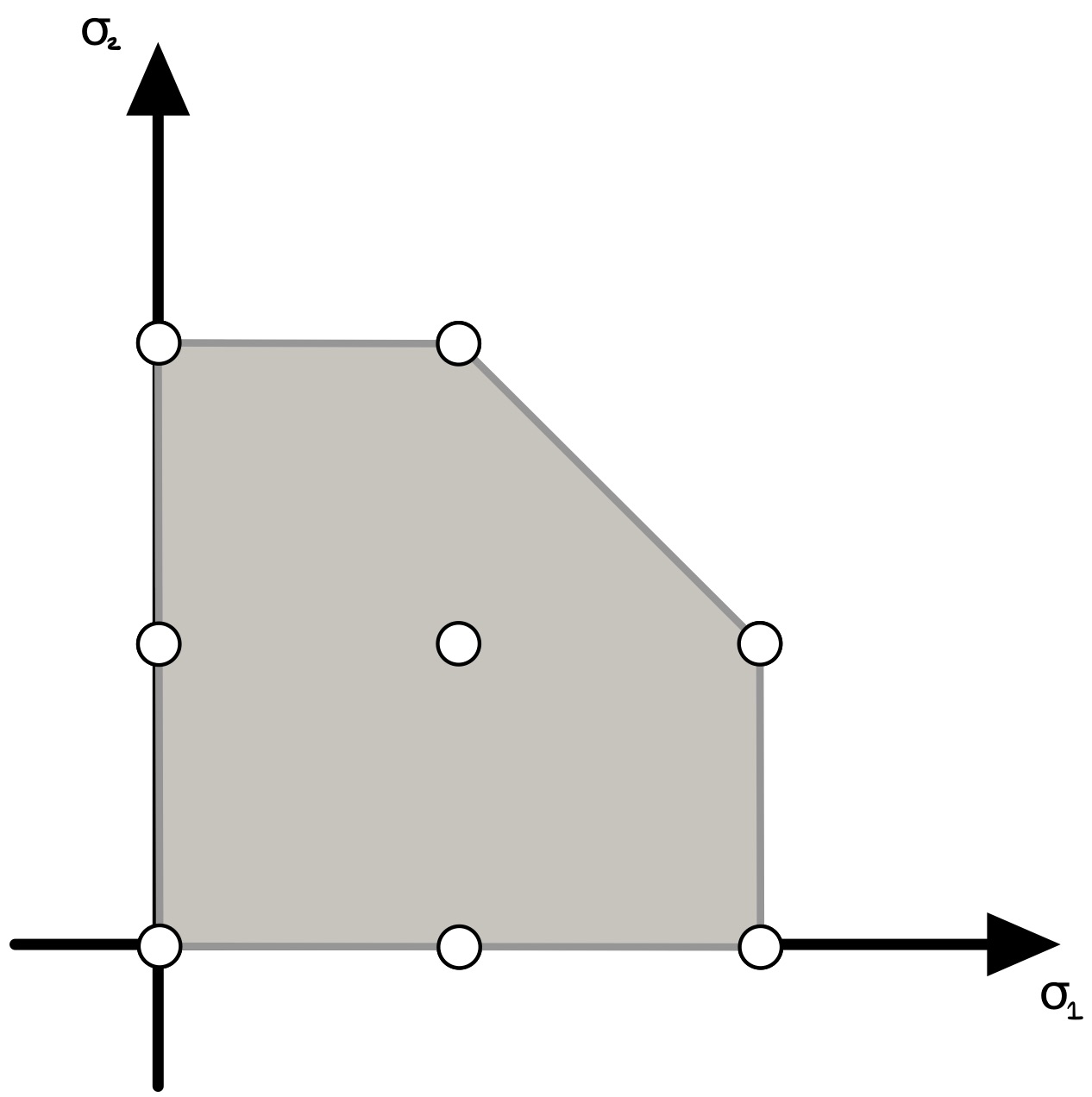}
  \caption{Scheduling set and stability region}
  \label{fig:sub2}
\end{subfigure}
\caption{(a) Switched queueing network example: each queue has one devoted server and there is an additional server that can provide one additional unit of service to one queue only.
(b) Stability Region example: the schedules for the switch network in (a) are circled; the stability region, which is the convex combination of schedules, is shaded.}
\label{fig:test}
\end{figure}

\subsubsection{Maximal Stability of MaxWeight.}

A well-known result due to Tassiulas and Ephremedes is the following

\begin{theorem}\label{MW:thrm}
Given $a(t), t\in\mathbb Z_+$ are independent identically distributed with mean $\bar a$ then
\begin{enumerate}[i)]
	\item If $\bar{a}\notin <
{\mathcal S} >$ then, regardless of the policy used, $(Q_j(t) : j=1,...,q)$ is transient.
\item  If $\bar{a}\in <
{\mathcal S} >^\circ$ then, under the MaxWeight policy, $(Q_j(t) : j=1,...,q)$ is positive recurrent.
\end{enumerate}
\end{theorem}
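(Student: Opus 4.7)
The plan is to prove the two parts of Theorem~\ref{MW:thrm} separately: Part (i) via a separating-hyperplane plus drift argument that works for any admissible policy, and Part (ii) via the quadratic Lyapunov function $V(\mathbf q) = \tfrac12 \|\mathbf q\|_2^2$ combined with the Foster--Lyapunov positive-recurrence criterion.

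For Part (i), suppose $\bar a \notin <\!\mathcal S\!>$. Since $<\!\mathcal S\!>$ is closed and convex, the separating-hyperplane theorem produces a vector $n \in \mathbb R^q$ and a scalar $\epsilon > 0$ with $n \cdot \bar a \geq n \cdot \sigma + \epsilon$ for every $\sigma \in <\!\mathcal S\!>$. Exploiting monotonicity of $\mathcal S$, and hence downward closedness of $<\!\mathcal S\!>$ in $\mathbb R_+^q$, I would then argue that $n$ may be chosen with non-negative entries (supporting hyperplanes of a downward-closed convex set in the non-negative orthant have non-negative normals). With $n \geq 0$, iterating \eqref{Qdynamic} yields
\begin{equation*}
n \cdot Q(t) \;=\; n \cdot Q(0) + \sum_{s=1}^t n \cdot a(s) - \sum_{s=1}^t n \cdot d(s),
\end{equation*}
and for any admissible policy $n \cdot d(s) \leq \max_{\sigma \in <\mathcal S>} n \cdot \sigma$, because $d(s)$ is dominated componentwise by some element of $\mathcal S$. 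The strong law of large numbers applied to the i.i.d.\ arrivals then gives $t^{-1} n \cdot Q(t) \to n \cdot \bar a - \max_\sigma n \cdot \sigma \geq \epsilon$ almost surely, so $\|Q(t)\|_2 \to \infty$ and the chain is transient regardless of the policy.

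For Part (ii), I would expand $V(Q(t+1)) - V(Q(t))$ using \eqref{Qdynamic} and take conditional expectations to obtain
\begin{equation*}
\mathbb E\bigl[V(Q(t+1)) - V(Q(t)) \mid Q(t) = \mathbf q\bigr] = \sum_{j} q_j\bigl(\bar a_j - \mathbb E[d_j(t) \mid Q(t) = \mathbf q]\bigr) + O(1),
\end{equation*}
where the $O(1)$ second-order term is controlled by the uniform bounds $a_{\max}$, $d_{\max}$. The defining property \eqref{MW} of MaxWeight says that $\sum_j q_j d_j(t) = \max_{\sigma \in \mathcal S, \sigma \leq \mathbf q} \mathbf q \cdot \sigma$; since the objective is linear in $\sigma$, this maximum equals $\max_{\sigma \in <\mathcal S>, \sigma \leq \mathbf q} \mathbf q \cdot \sigma$. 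The interior hypothesis $\bar a \in <\!\mathcal S\!>^\circ$ provides some $\epsilon > 0$ with $\sigma^\star := \bar a + \epsilon \mathbf 1 \in <\!\mathcal S\!>$, and comparing against $\sigma^\star$ gives
\begin{equation*}
\Delta V(\mathbf q) \;\leq\; -\epsilon \|\mathbf q\|_1 + C
\end{equation*}
for a constant $C$. Foster--Lyapunov then delivers positive recurrence of the Markov chain on $\mathbb Z_+^q$.

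The main technical obstacles are boundary effects. In Part (i), the non-negativity of the separating normal $n$ is essential for bounding $n \cdot d(s)$ by $\max_{\sigma \in <\mathcal S>} n \cdot \sigma$, and although geometrically natural, it requires careful use of the downward closedness of $\mathcal S$. In Part (ii), the constraint $\sigma \leq \mathbf q$ in \eqref{MW} may exclude the comparator $\sigma^\star$ whenever some coordinate $q_j$ is smaller than the corresponding coordinate of $\sigma^\star$; the fix is to replace $\sigma^\star$ with the truncation $\sigma^\star \wedge \mathbf q$, and the coordinates where truncation binds contribute only a bounded additive correction to the drift, which is absorbed into $C$ and does not affect the negative-drift conclusion when $\|\mathbf q\|_1$ is large.
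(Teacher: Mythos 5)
Your proposal is correct and follows essentially the same route as the paper: a separating-hyperplane argument for transience in part (i) and a quadratic Lyapunov (squared-norm) drift bound against the interior comparator $\bar a + \epsilon\mathbf 1$ for part (ii). The only differences are cosmetic: the paper phrases the comparison via the weak-duality inequality $\max\min \le \min\max$ (to echo the Minimax Theorem) and concludes positive recurrence from the telescoped time-average bound, whereas you compare to $\bar a + \epsilon\mathbf 1$ directly and invoke Foster--Lyapunov; your explicit handling of the $\sigma \le Q(t)$ truncation, the non-negativity of the separating normal, and the SLLN step in (i) are, if anything, slightly more careful than the paper's.
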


The first part of the theorem, above, establishes that $<\mathcal S>$ is the \emph{maximal stability region} for the switched queueing network. That is the set $<\mathcal S>$ is the largest set of arrival rates for which we can hope to find a policy to stabilize the queueing network. (Note that, in principle, this policy may need to know the arrival rates to stabilize the network.) What the second part shows is that there is a single policy, specifically MaxWeight, that achieves the maximal stability region, regardless of the which arrival rate it is inside the stability region.\footnote{Note here we are ignoring arrival rates on the boundary $\partial\!\! <\!\! \mathcal S\!\! > = <\!\!\mathcal S\!\!>\! \backslash\! <\!\!\mathcal S\!\!>^\circ$} We provide a sketch of part ii) below:

\Beginproof{(Sketch Proof of Theorem \ref{MW:thrm}ii).)}
	The change in service queue size is given by the number of arrivals minus departures. We can thus approximate this with the following ordinary differential equation:
\begin{align}\label{o.d.e.}
  \frac{d Q_j}{dt} = \bar a_j -\sigma_j(t), \qquad j =1,...,q, \quad t \in \mathbb R.
\end{align}
Before analyzing this o.d.e., let's briefly introduce some notation. We let $Q^\Sigma$ be the sum of all queue sizes and we let $P_j$ be the proportion of jobs in queue $j$, so $Q_j =Q^\Sigma P_j$. Also since $\bar{a}$ belongs to the interior of $<\!
{\mathcal S}\! >$ there exists some  $\epsilon>0$ such that 
\begin{equation}\label{abar}
	\bar{a} + \epsilon \bm 1 \in <\!
{\mathcal S}\! >.
\end{equation}

Now, for the above o.d.e. model, we can assume that the system is stable if the (euclidean) distance of the queue sizes goes to zero. We can analyze the change in this distance as follows
\begin{align*}
   \frac{d \|Q\|^2}{dt}  
& =
	2 \sum_{j=1}^p Q_j \frac{d Q_j}{dt} 
\tag{Chain rule}
\\
&
=
2 \sum_{j=1}^p Q_j \Big( \bar a_j -\sigma_j(t) \Big)
\tag{By \eqref{o.d.e.}}
\\
&
=
2 \min_{\sigma \in <\mathcal S>} \sum_{j=1}^p Q_j \Big( \bar a_j -\sigma_j \Big)
\tag{MaxWeight Definition \eqref{MW}}
\\
&
=
2 Q^\Sigma \min_{\sigma \in <\mathcal S>} \sum_{j=1}^p P_j \Big( \bar a_j -\sigma_j \Big)
\\
&
\leq 
2 Q^\Sigma \max_{P \in \mathcal P} \min_{\sigma \in <\mathcal S>} \sum_{j=1}^p P_j \Big( \bar a_j -\sigma_j \Big)
\\
&
=
2 Q^\Sigma \min_{\sigma \in <\mathcal S>} \max_{P \in \mathcal P}  \sum_{j=1}^p P_j \Big( \bar a_j -\sigma_j \Big)
\tag{Minimax Theorem}
\\
&
=
2 Q^\Sigma \min_{\sigma \in <\mathcal S>} \max_{j=1,...,p}  \Big( \bar a_j -\sigma_j \Big)
\\
&
\leq 
-2 Q^\Sigma \epsilon
\tag{by \eqref{abar}}
\end{align*}
From this we see that so long as the total queue size is positive then the (euclidean) distance of the queue size vector is decreasing, and so must decrease to zero. \hfill
\Endproof

The formal proof of Theorem \ref{MW:thrm} is given in the appendix. We note that the proof for MaxWeight has several connections with the proof of the Blackwell Approachability Theorem:
\begin{enumerate}
	\item The argument is constructive and applies a quadratic Lyapunov function.
\item The proof applies the Minimax Theorem.\footnote{We note that we apply the Minimax Theorem in the way that we construct our proof but this is not necessarily standard.}
\item The divergence of the policy follows from a separating hyperplane argument. 
\end{enumerate}
\noindent All of these properties indicate a close connection between MaxWeight and Blackwell Approachability, which we develop next. 

\subsubsection{MaxWeight and Approachability.}
Here we show that MaxWeight is an instance of Blackwell's policy, \eqref{dRa}, for a specific choice of $R$. 
Thus a reason for the robust stability properties of MaxWeight can be seen as a consequence of robustness to adversarial perturbations found in Blackwell's Approachability Theorem.


\begin{proposition}\label{MW:Prop}
The MaxWeight policy \eqref{MW} is an instance of Blackwell's policy \eqref{dRa} and as a consequence of Blackwell's Approachability Theorem it follows that for any subcritical arrival rate $\bar{a}\in <
{\mathcal S} >^\circ$
\begin{align}\label{MW:PropEq}
 \mathbb E \left[ \frac{1}{T}  \sum_{j=1}^q Q_j(T) \right] \xrightarrow[T\rightarrow \infty]{}0 \,.
\end{align}
\end{proposition}

\Beginproof
For the proof, we construct a payoff matrices $R^k$, $k=1,...,q$ was well as decisions $d'(t)$ and adversary decisions $a'(t)$ that give queueing behavior. 
We then show that for these choices the policy of Blackwell corresponds to the MaxWeight policy. 

	We define the $2q$-dimensional vectors 
\begin{align*}
  a'(t) &= (a_1(t),...,a_q(t),1,...,1) \\
d'(t) &= (1,...,1,d_1(t),...,d_q(t))
\end{align*}
where we assume that $a(t) = (a_1(t),...,a_q(t))$ is a random variable with mean $\bar a (t)$ belonging to $<\mathcal S>$ and we assume $d(t) = (d_1(t),...,d_q(t))$ belongs to $\mathcal S$. 
For each $k=1,...,q$, we define the $2q\times 2q$ matrix
\begin{align*}
  R^k = 
\begin{pmatrix}
	\delta_k & 0 \\
0 & -\delta_k
\end{pmatrix}
\end{align*}
where $\delta_k$ is the $q\times q$ diagonal matrix with only one non-zero diagonal entry in the $k$-th entry which is set equal to one.
Notice that
\begin{align*}
  d'(t) R^k a'(t) = a_k(t) - d_k(t),
\end{align*}
for $k = 1,...,q$. Thus using the notation of \eqref{VecPayoff}, we have that 
\begin{align*}
  R(d'(t),a'(t)) = a(t) - d(t)\,.
\end{align*}
So, the vector payoff $R(d'(t),a'(t))$ is exactly the change in queue length given in \eqref{Qdynamic}. 
So $\bar Q(T)$ as given in \eqref{Qbar} is the time averaged queue size
\begin{align*}
  \bar Q(T) =  \frac{1}{T}\left[Q(0)+\sum_{t=1}^T
\{ a(t) -d(t) \}\right] =  \frac{Q(T)}{T}\, .
\end{align*}
This shows that we can consider a switched queueing network within the framework of Blackwell.

We next consider the approachability properties of the switched queueing network. Specifically we consider approachability of the set 
\[
\mathcal Z = \{ z\in \mathbb R^q  : z_k \leq 0,\;\; \forall k=1,...,q \}.
\]
Clearly the projection onto this set, $P(t)$, equals zero for any $\bar Q(t) \geq 0$. So the hyperplane $\mathcal H(t)$, \eqref{hyperplane}, has normal vector 
$
  \hat n(t) = \bar Q(t)
$
 and so is given by
\begin{align*}
  \mathcal H(t) = \{ r\in \mathbb R^q : \bar Q \cdot r \leq 0 \}\, .
\end{align*}
Recall the from Blackwell's policy approachability holds when 
\[
R(d(t),a) \in \mathcal H(t),\,\quad \forall a \in <\mathcal S>\, .
\] 
Thus given the definition of $\mathcal H(t)$ and $R(d',a')$, above, we arrive at the condition
\begin{align*}
  \bar Q(t) \cdot d(t) \geq \bar Q(t) \cdot a\, , \qquad \forall a \in <\mathcal S> \,,
\end{align*}
or equivalently (maximizing over $a \in <\mathcal S>$ and noting that $\bar Q(t) = Q(t)/t$)
\begin{align*}
   Q(t) \cdot d(t) \geq  \max_{a \in <\mathcal S>}  Q(t) \cdot a = \max_{\substack{\sigma \in \mathcal S}} \;\;\ \sum_{j=1}^q Q_j(t) \sigma_j
\end{align*}
(The equality above holds since a maximizer of a linear function over a polytope must lie on an (extreme) point in $\mathcal S$.)
Thus by choosing $d(t)$ according to the MaxWeight policy, we maximize the lefthand side which in turn gives us a policy that satisfies the criteria of Blackwell's Approachability Theorem, and thus as a direct consequeunce we have that
\begin{align*}
 || \bar Q(T) ||_{L_2} \xrightarrow[T\rightarrow \infty]{} 0\end{align*}
since  $|| \bar Q(T) ||_{L_1} \leq || \bar Q(T) ||_{L_2} $, the result \eqref{MW:PropEq} holds. 
\Endproof

\noindent A few remarks about the above result can be made. 

\begin{enumerate}
\item The rate of convergence can be found to be of the order $O({1}/{\sqrt{T}})$. Thus the result can be seen as a form of ``fluid stability" or ``rate stability" for the policy.  Because the rate of departures from the network is of the same order as the rate of arrivals. Provided $a \in < \mathcal S >^\circ$ the result can be improved to form a bound that is $O({1}/{{T}})$. 
\item The result proven above holds when the arrival rates can change over time (in an adversarial way) so long as the expected number of arrivals at each time belongs to the interior of the stability region $ <{\mathcal S} >^\circ$. 
\item We note that there are variants of Blackwell Approachability that expand upon the basic version to incorporate general Lyapunov functions (cf.~\cite{cesa2006prediction}).  If we apply these alternative algorithms for queueing network scheduling then we arrive the $f$-MaxWeight policies. This is a variant of MaxWeight where the queue size $Q_j$ in the MaxWeight policy is replaced by $f(Q_j)$ for a suitably positive increasing function $f$.   
\item The argument given in Proposition \ref{MW:Prop}, can be extended to the BackPressure policies. The BackPressure policy is an extension of the MaxWeight policy introduced by \cite{tassiulas1990stability}.  BackPressure allows for routing decisions that send jobs through multiple queues. BackPressure can also be seen to be an instance of Blackwell Approachability for a suitable choice of $R$.
\item MaxWeight as stated assumes unit sized jobs and there is no randomness in the service received by a job. However, the policy can be extended to jobs which have a random size or a random probability of being served. In this case, maximal stability does not hold under the policy \eqref{MW}. We need to consider a policy of the following form
\begin{align}\label{MW}
  d(t) = \argmax_{\substack{\sigma \in \mathcal S }} \;\;\ \sum_{j=1}^q \mu_j Q_j(t) \sigma_j \, ,
\end{align}
where $\mu^{-1}_j$ is the mean size of jobs at queue $j$. This can be seen as a consequence of the form of the function $R(d,a)$ in this case. Here service must be given within each matrix $R^k$ in order to represent this model within the Blackwell framework, specifically,
\begin{equation*}
 \mathbb E[ R^k] = 
\begin{pmatrix}
	\delta_k & 0 \\
0 & -\mu_k \delta_k
\end{pmatrix}.
\end{equation*}
However arrivals can be implemented as vector applied to these matrices.
So arrivals can be specified by an adversary but service must be specified as a parameter of the game. For this reason, the MaxWeight policy is robust to adversarial perturbations in arrivals, but may not have the same property when we consider adversarial job sizes. 
Nonetheless the mean job sizes and other statistics can still be learned. We discuss this in Section \ref{RegretQ}. 
\end{enumerate}

\subsection{Literature Review}


As mentioned earlier, the original  Approachability Theorem is due to \cite{blackwell1956analog}. 
The connection with sub-linear regret is described by \cite{blackwell1954controlled}. 
The use of Blackwell's condition is now commonly applied to analyze the regret of online learning algorithms. See \cite{cesa2006prediction} for an overview. The text of \cite{cesa2006prediction} provides a generalized Lyapunov version of Blackwell's condition, which as discussed specializes to well known variants of $f$-MaxWeight in the case of switched queueing networks, see \cite{shah2012switched}.  
It has also been shown that sub-linear regret is equivalent to Blackwell approachability, \cite{abernethy2011blackwell}.

The BackPressure policy, of which MaxWeight is a special case, was first introduced by \cite{tassiulas1990stability}.  During this period several commonly used queueing policies were found to be unstable when each queue was nominally underloaded, for instance see \cite{rybko1992ergodicity}. 
The BackPressure policy is a popular algorithm that circumvents these issues. 
It was later found that input-queued switches, which are the routers used in the core of the internet, could have a sub-optimal stability region under the greedy work-conserving strategies used at the time. For this reason the MaxWeight algorithm was proposed by \cite{mckeown1999achieving} as a strategy to achieve maximal stability.
Since then 
the MaxWeight policy has been applied extensively in the analysis of input queued switches. More generally  the MaxWeight and BackPressure policies have been widely analyzed in a broad set of queueing models.
The paper \cite{dai2005maximum} gives an account of their broad applicability in a wide variety of contexts including manufacturing, call centres as well as communications systems. 
 We refer to \cite{georgiadis2006resource} for a survey on MaxWeight and BackPressure. Also the recent text \cite{dai2020processing} provides an excellent introduction to stochastic networks, their stability as well as providing a good account of the various use cases of MaxWeight and BackPressure policy. 

The connection between Blackwell Approachability and MaxWeight/Backpressure appears to be new but we note it is a natural consequence of the two approaches. 
Specifically,
the results of \cite{neely2010universal} are particularly relevant to our analysis as it is shown that MaxWeight and Backpressure are robust to changing arrival rates so long as the mean arrival rate lies within the stability set over each time period (see also \cite{neely2010stochastic}). 
The literature on adversarial queueing models is also relevant  \citep{borodin1996adversarial}.
Similar fluid stability results are sufficient for various adversarial queueing models \citep{gamarnik1998stability}.
MaxWeight has been analysed in the context of adversarial queueing by \cite{lim2013stability}. 

From a theoretical perspective, understanding of MaxWeight has improved through a series of papers analyzing state-space collapse of MaxWeight in heavy traffic, see for instance \cite{stolyar2004maxweight} and \cite{maguluri2016heavy}.
Also it can was proven that MaxWeight does not enjoy the same robust stability properties that BackPressure enjoys in large networks \cite{bramson2019stability}. 
An intriguing fluctuation bound for MaxWeight is found in \cite{sharifnassab2020fluctuation}. More recent literature on MaxWeight and BackPressure in queueing networks has focused closed systems, such as using it as a decision rule in ride sharing platforms (see \cite{banerjee2018dynamic} and \cite{kanoria2019blind}). Another application concerns road traffic signal control   \citep{le2015decentralized, varaiya2013max, xiao2015throughput}, which we will discuss in some more detail in Section \ref{sec:RL}.

A variety of recent papers combine elements of learning and MaxWeight. One area of research employs MaxWeight-like policies in designing learning and information processing systems, such as those encountered in crowd-sourcing, in order to achieve maximum throughput or minimize regret \citep{hsu2021integrated, massoulie2018capacity, shah2020adaptive}. The novelty here is that instead of working with queues of jobs, the decision maker is trying to minimize certain information backlog  that corresponds to ``uncertainty'' in the system. 
Another line of work investigates how to  augment MaxWeight with explicit learning and side information to improve performance \citep{krishnasamy2018augmenting, neely2012max}. However, they tend to focus more on parameter estimation, rather than approachabilitiy proporties for the algorithm. We will focus on such methods in the next section.

\section{Regret Bounds and Queueing}\label{RegretQ}

When we analyzed MaxWeight, we saw that stability is achievable under an unknown arrival rate. 
However, we do require knowledge of service rates to make decisions.
So if service rates are not known in advance then they must be learned. 
Recently there have been a number of papers that investigate sequential learning of service rates in queueing systems. These papers consider how bounds on regret (as defined above) relate to the performance of a queueing system.
We will review these shortly, but first we discuss the main ideas by developing a simple example.

\subsection{Regret and service in a queue.} 

We suppose that jobs arrive to a single server queue. We let $a_t$, $t \in \mathbb Z_+$, be the inter-arrival time between $t$-th arrival and the $(t+1)$-th arrival.
When we serve a job, we suppose that there are different modes of service that we can select. This represents the manner in which we are going to serve the job, with some service modes being more suitable than others. We let $\mathcal M$ be the set of service modes. 
A policy $\pi=(\pi_t : t \in \mathbb Z_+)$ choses a service mode for each job, i.e. for the $t$-th job at the queue we select service mode $\pi_t \in \mathcal M$.
We let $\tau^m_t$ be the time spend serving the $t$-th job under service mode $m$. We assume that
for each job, one service mode (which is not known to us) will lead to a fast service time, $\tau^\star = \min_{m \in \mathcal M} \tau^m_t$, and the other service modes will lead to a slower service time. 
We let $\tau_{t}^{\pi}$ be the time taken to serve the $t$-th job under policy $\pi$.
Shortly, in Section \ref{Percep}, we will discuss a specific model that relates the service modes to the speed of service. See Figure \ref{RegQ_Fig} for a simple example of model just described. 

Following our earlier definition, \eqref{Regret}, the regret of policy $\pi$ is 
\begin{align*}
  \mathcal R\! g(T) = \sum_{t=1}^T (\tau_{t}^{\pi} - \tau^\star )\, .
\end{align*}
I.e., the regret is the difference between the total service we receive under policy $\pi$ and the service that we receive under optimal service.
Given the inter-arrival times, $a_t$, and service times, $\tau^\pi_t$,
the waiting time of the $t$-th customer satisfies the famous Lindley recursion:
\begin{align*}
  W_{t}^\pi = \max \{ 0 , W^\pi_{t-1} + \tau_{t-1}^\pi - a_{t-1} \}\, .
\end{align*}
The waiting time under the optimal policy is then
\begin{align*}
    W_{t}^\star = \max \{ 0 , W^\star_{t-1} + \tau^\star - a_{t-1} \}\, .
\end{align*}
In both cases we assume that the queues are initially empty: $W_0^\pi = W_0^\star =0$. 

\begin{figure}[hbt]
\centering
  \includegraphics[width=0.35\textwidth]{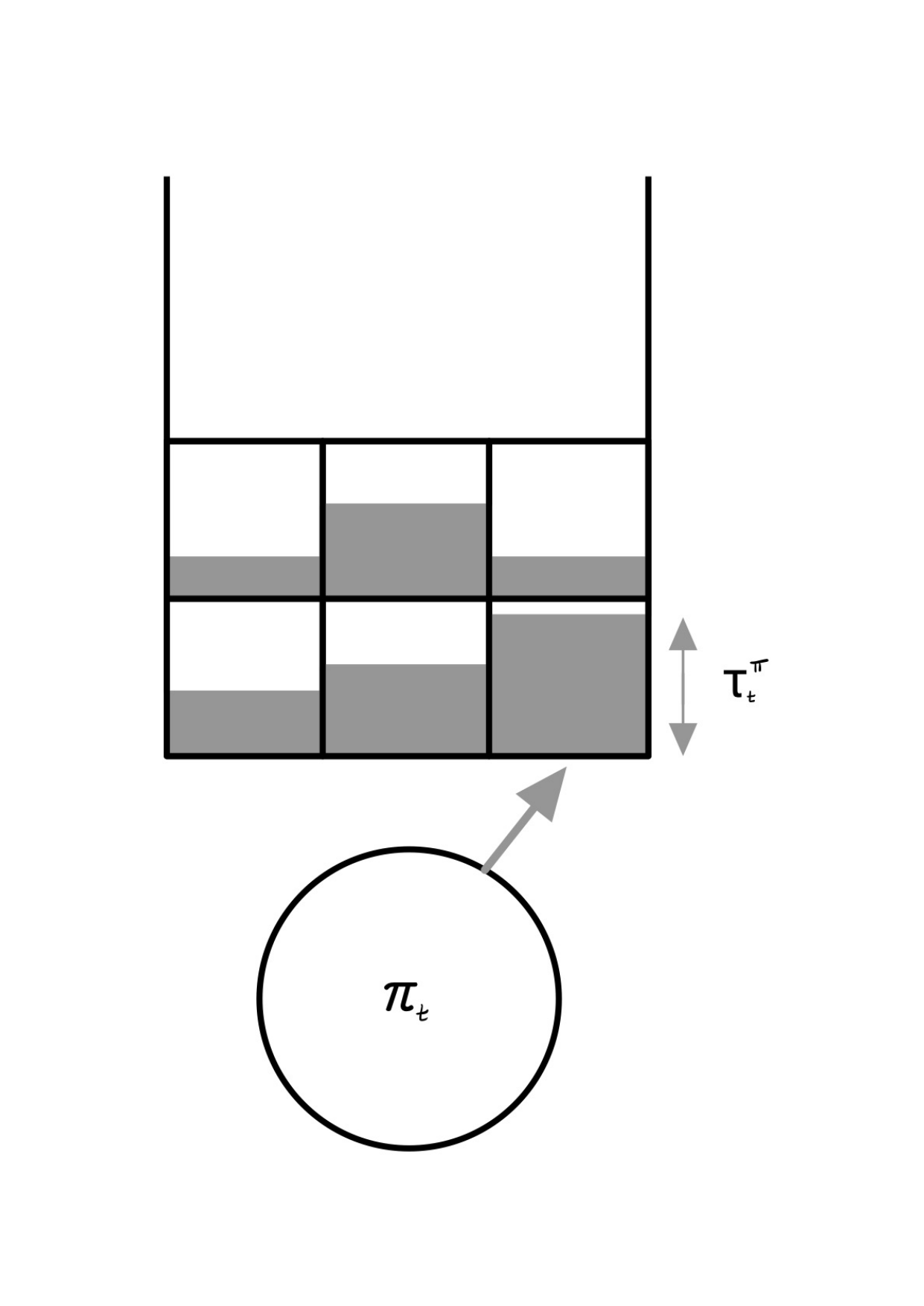}
  \caption{Regret at a queue. Here is a queue with two jobs and three modes of service. The current policy will serve the first job in under the 3rd service mode. This decision has a long service time. We can see that, in hindsight, the 1st service mode would have been best.\label{RegQ_Fig}}
\end{figure}

The following simple lemma provides a way to compare the waiting time of jobs in terms of the regret of the policy $\pi$

\begin{lemma}\label{LindLem} If $t_0$ is the last time before $t$ where the queue for policy $\pi$ is empty, i.e. $W^{\pi}_{t_0} = 0$, then
\begin{align}\label{W:Regret}
  W^{\pi}_{t+1} -  W^{\star}_{t+1} \leq  \mathcal R\! g(t) - \mathcal R\! g(t_0-1) 
\end{align}
\end{lemma}

With this  result in hand, there are a host of ways that we can apply regret analysis to a queueing model. In informal terms, here are two ways that one can proceed.
First, notice that if $t_0 = o(t)$, i.e. queues do not empty often, then the term $\mathcal R\! g(t) $ will dominate the above expression, \eqref{W:Regret}. Thus we have a form of rate stability. Specifically
\begin{align}\label{o(t)}
  \text{if }\quad W^{\star}_{t} = o(t) \quad \text{then } \quad W^{\pi}_{t} = o(t)\,.
\end{align}
So if there is \emph{some} policy under which queues do not grow linearly, then the queues will not grow linearly under $\pi$, either. This is essentially the observation made by \cite{walton2014two}.
This is the form of the result that one can expect in an adversarial setting. Second, in a stochastic setting we can expect a queue to be positive recurrent, that is $t-t_0 = O(1)$. Thus the bound is of the form of the expected change in $\mathcal R\! g(t) - \mathcal R\! g(t_0-1)$. In a stochastic multi-arm bandit problem, the regret is often the order $\log t$, So $\log t - \log( t_0-1) \sim \frac{t-t_0}{t}$, and we can expect that
\begin{align}\label{1/t}
  \text{if }\quad {t-t_0} = O(1)\quad\text{and}\quad  \mathcal R\! g(t) \sim \log t \quad \text{then}\quad
W_{t+1}^\star = O\left( \frac{1}{t}\right) \,.
\end{align}
This is essentially the observation made by \cite{krishnasamy2021learning} for a specific algorithm, although the same result likely holds in more generality. 

As the observations \eqref{o(t)} and \eqref{o(t)} suggest, there are a host of algorithms that are likely to permit exact analysis; however, one classical learning algorithm that permits a particularly tractable analysis is the perceptron algorithm, which we discuss next.

\subsection{Analysis with the Perceptron Algorithm}
\label{Percep}
We consider the setting where there are two modes of service $\mathcal M = \{-1,+1\}$. With each job, $t$, we associate a context $x_t$ and a response $y_t$. We assume $x_t$ is a bounded vector in $\mathbb R^p$, i.e. $||x_t|| \leq D$ for some $D$, and we assume that $y_t$ belongs to $\mathcal M= \{-1,+1\}$.
The context summarizes information about the job and with this information we are allowed to make a service mode decision, $\pi_t \in \mathcal M$.  We assume that the sign of $y_t$ determines whether the job receives fast or slow service. Specifically, if the sign of $\pi_t$ and $y_t$ match then the job receives fast service and otherwise it receives a slower rate of service.  That is we assume that
\begin{align*}
  \tau_t^\pi = 
\begin{cases}
	\tau^\star, &\text{if } \pi_t = y_t\, , \\
\tau^0, &\text{if } \pi_t \neq y_t\, . \\
\end{cases}
\end{align*}
Here $\tau^0$ corresponds to the slow service time and so $\tau^0 > \tau^\star$.
This correctly classifies the context of a job and leads to faster service. 

We restrict ourselves to policies that apply weights to each context in-order to make a decision. Specifically, for weights $w\in \mathbb R^p$, the policy  $\pi_w$ with context $x$ makes the decision 
\begin{align*}
  \pi_w(x) =
\begin{cases}
	+1, &\text{if } \langle w, x\rangle \geq 0 \, ,\\
-1, & \text{if } \langle w, x \rangle < 0 \, . \\
\end{cases}
\end{align*}
For weights $w$,
the loss of the $t$-th job is
\begin{align*}
  \hat l_t(w) = \mathbb I [\, \pi_w(x_t) \neq y_t ] \, .
\end{align*}
We can upper-bound this by the hinge loss
\begin{align*}
   l_t (w) = \max \left\{ 1- y_t \langle w, x_t \rangle, 0 \right\} \, .
\end{align*}
It is not hard to check that $\hat l_t(w) \leq l_t(w)$. The resulting optimization 
\begin{align*}
  \min_{w\in \mathbb R^p} \quad \sum_{t=1}^T \max \left\{ 1- y_t \langle w, x_t \rangle, 0 \right\}
\end{align*}
is known as a support vector machine.\footnote{Often an additional regularization term is added to this objective though we do not consider that here.}
Note that
\begin{align*}
  \nabla_w  l_t = 
\begin{cases}
	- y_tx_t & y_t \langle w, x_t \rangle < 1 \, ,\\
0 & \text{otherwise.}
\end{cases}
\end{align*}
If we apply an online gradient descent algorithm, we get the following update algorithm for the weights $w_t$:
\begin{align*}
  w_{t+1} = w_t + \alpha y_t x_t \mathbb I [y_t \langle w_t, x_t \rangle < 1] 
\end{align*}
where here $\alpha>0$ is the learning rate of the algorithm and we take $w_0=0$. 
The above algorithm is known as the perceptron algorithm. 

Allowing the learning rate to depend on time, 
the following is a standard bound in online convex optimization (OCO):
\begin{proposition}[OCO bound]\label{OCO:Lem2}
	 For any function $\nabla l_t(w)$ and $w_t$, as above, then for all $w \in \mathbb R^p$
	 \[
	 \sum_{t=1}^{T} l_t(w_t) - \sum_{t=1}^{T}  l_t(w)
	 \leq 
\frac{|| w_1 -w ||}{\alpha_1}+
	 \sum_{t=1}^{T} \frac{||w_t - w ||^2}{2}
	 	\left(\frac{1}{\alpha_t}-\frac{1}{\alpha_{t+1}}\right)
	 	+
	 	\sum_{t=1}^{T} \frac{\alpha_t}{2} || \nabla l_t(w_t) ||^2 \, .
	 \]
\end{proposition}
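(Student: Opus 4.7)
The plan is to combine three standard ingredients of online convex optimization: convexity of the loss, the gradient-descent identity obtained by expanding a squared distance, and an Abel-style telescoping that accounts for the time-varying step sizes $\alpha_t$.

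First, I would use convexity of $l_t$ to linearize the regret term by term:
$$l_t(w_t) - l_t(w) \;\leq\; \langle \nabla l_t(w_t),\, w_t - w \rangle, \qquad \forall w \in \mathbb R^p.$$
Summing over $t=1,\ldots,T$ reduces the task to upper-bounding $\sum_{t} \langle \nabla l_t(w_t), w_t - w\rangle$. (The perceptron update in the text is precisely $w_{t+1} = w_t - \alpha_t \nabla l_t(w_t)$, since the hinge-loss subgradient is $-y_t x_t \mathbb I[y_t\langle w_t,x_t\rangle<1]$, so this convexity step is what lets us pass from $l_t$ back to $\nabla l_t$.)

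Second, I would expand the squared distance to an arbitrary comparator $w$ using the update rule:
$$\|w_{t+1} - w\|^2 \;=\; \|w_t - w\|^2 \;-\; 2\alpha_t \langle \nabla l_t(w_t), w_t - w\rangle \;+\; \alpha_t^2 \|\nabla l_t(w_t)\|^2.$$
Solving for the inner product gives the pointwise identity
$$\langle \nabla l_t(w_t), w_t - w\rangle \;=\; \frac{\|w_t - w\|^2 - \|w_{t+1} - w\|^2}{2\alpha_t} \;+\; \frac{\alpha_t}{2}\|\nabla l_t(w_t)\|^2.$$
The second summand already produces the last term in the proposition once summed over $t$.

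Third, I would sum the first summand from $t=1$ to $T$ and perform Abel summation (i.e.\ summation by parts) on the telescope, treating the weights $1/\alpha_t$ as non-constant. Reindexing the subtracted piece produces a boundary contribution $\|w_1 - w\|^2/(2\alpha_1)$, a bulk contribution of the form $\sum_{t}\tfrac{1}{2}\|w_t - w\|^2\bigl(\tfrac{1}{\alpha_t}-\tfrac{1}{\alpha_{t+1}}\bigr)$ matching the statement, and a trailing term $-\|w_{T+1}-w\|^2/(2\alpha_{T+1})$, which is non-positive and can be dropped. Adding back the $\tfrac{\alpha_t}{2}\|\nabla l_t(w_t)\|^2$ terms yields the stated inequality.

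The only subtlety is the Abel rearrangement: one has to be careful that the reindexing of $\|w_{t+1}-w\|^2/\alpha_t$ lines up with $\|w_t-w\|^2/\alpha_{t-1}$ and that the boundary terms at $t=1$ and $t=T+1$ match exactly what is quoted in the proposition; every other step is a standard inequality. No additional structure on $l_t$ beyond convexity and the explicit gradient update is needed, which is why the bound is stated for any $\nabla l_t$ and any comparator $w$.
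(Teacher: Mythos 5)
Your overall route is the same as the paper's: convexity to reduce the regret to $\sum_t \langle \nabla l_t(w_t), w_t-w\rangle$, the expansion of $\|w_{t+1}-w\|^2$ under the update $w_{t+1}=w_t-\alpha_t\nabla l_t(w_t)$, then a summation by parts in which a non-positive trailing term is dropped (the paper merely invokes convexity at the end rather than at the start, which is immaterial). The gap is in the Abel step, exactly the point you flag as the ``only subtlety.'' Writing $b_t:=\|w_t-w\|^2$, summation by parts gives the identity
\begin{equation*}
\sum_{t=1}^{T}\frac{b_t-b_{t+1}}{2\alpha_t}
=\frac{b_1}{2\alpha_1}
+\sum_{t=2}^{T}\frac{b_t}{2}\left(\frac{1}{\alpha_t}-\frac{1}{\alpha_{t-1}}\right)
-\frac{b_{T+1}}{2\alpha_T},
\end{equation*}
so the bulk weights are the \emph{backward} differences $\frac{1}{\alpha_t}-\frac{1}{\alpha_{t-1}}$ and the trailing denominator is $\alpha_T$, not $\alpha_{T+1}$. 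The decomposition you assert, with forward differences $\frac{1}{\alpha_t}-\frac{1}{\alpha_{t+1}}$ and trailing term $-b_{T+1}/(2\alpha_{T+1})$, is not an identity, and after dropping the trailing term it is not even an upper bound: for decreasing step sizes the forward differences are non-positive while the backward ones are non-negative. Concretely, take $T=2$, $w=w_1$ (so $b_1=0$), $b_2>0$, $b_3=0$ and $\alpha_1>\alpha_2>\alpha_3$: the left-hand side equals $\frac{b_2}{2}\left(\frac{1}{\alpha_2}-\frac{1}{\alpha_1}\right)>0$, whereas your claimed bound equals $\frac{b_2}{2}\left(\frac{1}{\alpha_2}-\frac{1}{\alpha_3}\right)<0$.

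What pushed you into this is a typo in the proposition as printed: the paper's own proof produces the backward-difference form, writing the middle sum as $\sum_{t=1}^{T}\|w_t-w\|^2\left(\frac{1}{\alpha_t}-\frac{1}{\alpha_{t-1}}\right)$ with the convention $\|w_1-w\|^2/\alpha_0:=0$, whose $t=1$ term supplies the $\|w_1-w\|^2/\alpha_1$ boundary contribution (the unsquared $\|w_1-w\|$ in the display is likewise a typo). The repair of your argument is therefore to keep the identity above, drop the genuinely non-positive term $-b_{T+1}/(2\alpha_T)$, and state the bound with $\frac{1}{\alpha_t}-\frac{1}{\alpha_{t-1}}$, rather than reindexing to force the $\alpha_{t+1}$ appearing in the printed statement.
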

The above result bounds the performance of the sequence of weights $w_t$ in comparison with any fixed choice $w$. Thus the above bound is a regret bound. Appropriate conditions on the boundedness of $w_t$ and the magnitude lead to sharp regret bounds. A more general overview is given by \cite{hazan2019introduction}.
 
We pursue this now in the context of the Perceptron Algorithm. 
We consider the Perceptron Algorithm with a fixed learning rate, $\alpha$, we note that since $w_0=0$. The sequence of weights $w_t$ are proportional over each choice of $\alpha$. 
A consequence of the previous bound and this observation is the following classical result:

\begin{theorem}[Perceptron Mistake Bound]
\label{PMB}
	If there exists $w^\star \in \mathbb R^p$ such that $y^{(t)}\langle x^{(t)},w^{\star} \rangle  \geq 1$ then the perceptron algorithm is such that 
\begin{align*}
  \sum_{t=1}^\infty \hat l_t(w_t)   \leq {D}^2 ||w^\star||^2\, .
\end{align*}
\end{theorem}

If we apply the perceptron algorithm to the service of rate of customers as considered in Lemma \ref{LindLem} then, since the number of mistakes made by the algorithm is finite, the difference in the regret of the algorithm is bounded by a constant. Thus applying  Lemma \ref{LindLem}, we arrive at the following result.

\begin{corollary}\label{QOpt}
For weights determined by the perceptron algorithm then the waiting time of customers at the queue is
\begin{align*}
  W^{\pi}_{t+1} -  W^{\star}_{t+1} \leq {D}^2 ||w^\star||^2\, .
\end{align*}
Moreover if $t'$ is the first time that  $W^{\pi}_{t'}=0$ after the last mistake of the perceptron algorithm, then for all $t\geq t'$
\begin{align*}
  W^{\pi}_{t} =  W^{\star}_{t}\, .
\end{align*}	
\end{corollary}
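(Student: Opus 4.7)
The plan is to combine Lemma \ref{LindLem} with the Perceptron Mistake Bound, exploiting the very clean structure of the regret under $\pi$. The key observation is that under the perceptron policy the per-job excess service $\tau^\pi_t - \tau^\star$ takes exactly two values: $0$ when $\pi_t = y_t$, and $\tau^0 - \tau^\star$ when the perceptron misclassifies. Hence $\mathcal{R}\!g(T)$ is exactly $(\tau^0-\tau^\star)$ times the number of mistakes made up to time $T$, and by the Perceptron Mistake Bound this is uniformly bounded by $(\tau^0-\tau^\star) D^2\|w^\star\|^2$ for all $T$. The statement of the corollary suppresses the factor $\tau^0-\tau^\star$, which is the implicit normalization (unit service gap).

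For the first inequality I would apply Lemma \ref{LindLem} to write $W^\pi_{t+1} - W^\star_{t+1} \leq \mathcal{R}\!g(t) - \mathcal{R}\!g(t-t_0)$, where $t_0$ is the last time before $t$ at which the $\pi$-queue was empty. Since $\mathcal{R}\!g$ is nondecreasing (each increment is $\geq 0$) and uniformly bounded above by $D^2\|w^\star\|^2$ (under the normalization above), the right-hand side is at most this same constant, which yields the claimed bound.

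For the equality after the last mistake, two observations drive the argument. First, once the perceptron has made its final update, the weights $w_t$ are frozen at some $w_\infty$ that correctly classifies every subsequent context, so $\pi_t = y_t$ and hence $\tau^\pi_t = \tau^\star$ for all later jobs. Second, past this last-mistake time the Lindley recursions for $W^\pi$ and $W^\star$ are driven by identical inter-arrival times and identical service times; they can only differ through their state at time $t'$. The pathwise comparison $W^\star_t \leq W^\pi_t$, which follows by a straightforward induction on the Lindley recursion using $\tau^\star \leq \tau^\pi_t$ pointwise, shows that at the moment $W^\pi_{t'} = 0$ one automatically has $W^\star_{t'} = 0$. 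A short induction then propagates the equality $W^\pi_t = W^\star_t$ for all $t \geq t'$.

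The only delicate step is ensuring that an emptying time of the $\pi$-queue after the last mistake is also an emptying time of the ideal queue; this is exactly where the pathwise dominance $W^\star \leq W^\pi$ is indispensable, since without it the two queues might still differ at $t'$ despite seeing identical future inputs. Once this monotonicity is recorded, everything else is bookkeeping on Lindley's recursion and an application of the Mistake Bound.
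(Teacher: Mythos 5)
Your argument is correct and follows essentially the same route as the paper's (only sketched) justification: bound the regret via the Perceptron Mistake Bound---rightly flagging the suppressed factor $\tau^0-\tau^\star$---plug it into Lemma \ref{LindLem} for the first inequality, and use the pathwise dominance $W^{\star}_t \leq W^{\pi}_t$ to get $W^{\star}_{t'}=0$ at the emptying time and propagate equality afterwards, a step the paper leaves implicit. One minor slip: after the last \emph{mistake} the weights need not be frozen (the perceptron as defined still updates on margin violations with correct sign), but your argument only requires $\pi_t = y_t$, hence $\tau^{\pi}_t=\tau^\star$, for all subsequent jobs, which holds by definition of the last mistake, so nothing breaks.
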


The analysis here is particularly clean as the number of mistakes made by the perceptron algorithm in this ``{well-separated}" setting is finite.\footnote{Here well-separates means that there exists $w^\star \in \mathbb R^p$ such that $y^{(t)}\langle x^{(t)},w^{\star} \rangle  \geq 1$.} In general the difference in aggregate loss (or number of mistakes) will continue to increase, and the change in losses with decrease rather than be zero after some point. In this case, slightly more careful bounds need to be applied to gain similar results, although the main idea remains the same.

\subsection{Literature}

%
The application of Lindley's recursion to the regret analysis of a queueing system follows the discussion \cite{walton2014two}. Here a result similar to Lemma \ref{LindLem} is derived and the regret for a variety of expert and bandit algorithms is considered. The results of  \cite{walton2014two} are primarily concerned with the adversial arrival and service types. The regret of stochastic bandit problems are considered in \cite{NIPS2016_430c3626}. Here a specific algorithm called $Q$-UCB is considered and a detailed analysis of the transition from instability to stable behavior is considered. The discussion in display \eqref{1/t} is a heuristic derivation of this result in \cite{NIPS2016_430c3626} and suggests that the results hold more broadly than for the $Q$-UCB algorithm. 
The work is developed further by the authors to consider multiple queues and matching constraints \citep{krishnasamy2021learning}.
The adversarial setting is considered for a MaxWeight model by \cite{liang2018minimizing}. Further formulations for wireless networks \citep{stahlbuhk2019learning} and for network utility maximization are considered \citep{liang2018network}. A strategic adversarial model of queueing is considered by \cite{gaitonde2020stability}. Here the loss in capacity through adversarial coordination is the object of interest. The authors improve these results exploiting sub-modularity structure \cite{gaitonde2020virtues}.

The set of papers above are amongst the first papers to consider regret analysis within a queueing system. Nonetheless it should be noted that there is an extensive prior literature adressing related problems. For a discounted reward objective the Gittin's index is known to be a Bayesian optimal bandit algorithm, see \cite[Section 35]{lattimore2020bandit}. There are several works which develop a queueing approach to the multi-arm bandit problem. For instance see \cite{aalto2009gittins,jacko2010restless,scully2020simple} as well as the text \cite{gittins2011multi}.
Further adversarial models of queueing systems are considered in some detail initiated by the work \cite{borodin1996adversarial}.

The perceptron algorithm is a classical algorithm considered in machine learning \citep{novikoff1963convergence}. The mistake bound forms the basis for VC and Radamacher complexity bounds \citep{vapnik2013nature}. Online convex optimization and perceptron mistake bounds follow the proofs given in \cite{shalev2011online}. 

Further, it is worth noting that the analysis of regret only applied to algorithms applied in a online setting where parameters must be fitted statistically. I.e. a regret analysis is not relevant to models trained by simulation. Here only the quality of the final solution reached by the learning algorithm matters. If an algorithm is trained online then care must be taken to the potential changing dynamics and statistics of the environment. Such a view point is taken by \cite{besbes2015non} in the setting of stochastic approximation and by \cite{daniely2015strongly} in the non-stochastic case. 

In practical applications, the effectiveness of an online learning algorithm applied to a queueing system may well depend on the application area. A learning algorithm will always take time to converge to the best decision on average. So the average decision must be a metric of interest and the algorithm must have time to converge. For instance, demand from TCP flows at an internet router will look to overload the router thus the adversarial framework considered in Section \ref{BAMW} would be more appropriate. However, more slowly varying applications such as ride-sharing and matching markets might be more appropriate to the framework described here. Here statistically well-defined customer classes can be learning and optimized over time. This approach is considered by \cite{hsu2021integrated} where the unknown payoff from service at a queueing system must be learned and maximized subject to servers capacity constraints.
 \cite{johari2021matching} considers a related line of work where server classes must be learned. 
The work of \cite{liu2020pond} consider an online dispatch problem as a sequential learning problem and combine an online learning analysis with virtual queueing ideas in order to manage system capacity constraints. 
A further area that combines inventory control and online learning is online stochastic bin packing. Here an unknown distribution must be sequentially learned and packed in to unit sized bins. We refer the reader to \cite{doi:10.1287/opre.2019.1914} and references therein. 


\section{Role of Information}\label{information}

The algorithms introduced the previous sections generally have a ``learning'' flavor: the algorithm either explicitly gathers information about unknown quantities in the environment, such as in the case of bandits, or implicitly controls the system in a manner that mitigates the lack of knowledge of such information, such as Max-Weight being able to stabilize the queues despite not knowing the arrival rate. What exactly, then, is the information that is being learned, and how does such information shape the performance and algorithmic design? These questions will guide our discussion in the next two subsections.

At a high-level, by information we are referring to knowledge of uncertain values  in the system. As the exposition of \cite{lu2021reinforcement} discusses, it  is  instructive to further consider the following two categories of uncertainty: 
\begin{enumerate}
\item Epistemic information: this corresponds to static uncertain parameters and system model specifications. For instance, in a typical queueing network, uncertainty in the arrival rate or mean service time belongs to this categories. 
\item Aleaoric information: this corresponds to uncertain realizations and stochastic events that manifest over time, and in particular, is unknown even if we have a complete specification of the system model. For example, while we may know that the arrival counts in time step $t$ follows a Poisson distribution with mean 1 (epistemic information), the exact realization of the number of arrivals $A_t$ is still uncertain. Knowledge of the realized value of $A_t$ would correspond to aleaoric information. 
\end{enumerate}
For the purpose of our discussion, one can roughly think of epistemic information as about static quantities, whereas aleaoric information captures dyanmic realizations of randomness over time. 

\paragraph{Aleaoric information: past, present and future}
We will focus in this section on the impact of aleaoric information. We will have a broader discussion about the role of epistemic information in the next section in the context of reinforcement learning. Note that, the bandit-inspired algorithms and the Max-Weight algorithm both focus on addressing epistemic uncertainty. 

We will further orient our discussion along three sub-type of aleaoric information, determined by where the information is situated along the time axis, relative to the present time frame: 
	\begin{enumerate}
	\item Information about the future inputs, a.k.a.~predictive information. E.g., how many jobs will arrive and to what queues in the next 5 hours. 
	\item Information about the current system state, such as the queue lengths at various servers in a load-balancing model.  
	\item Information about the past, possibly stored as part of the algorithm's internal memory. 
	\end{enumerate}



\paragraph{Measuring the value of information} One of the core questions when it comes to the role of information is its decision-theoretic value, expressed as the performance improvement that accessing such information may enable. For instance, we may seek to find a function $v$ such that, {given $x$ amount of future information, we may claim  to achieve $v(x)$ amount of performance improvement}. 

Two immediate questions come to mind. First, in various stochastic resource allocation problems, what how does the value $v(x)$ depend on the amount of information, $x$? In a centralized decision-making framework (e.g., barring competing selfish agents), more information cannot hurt the decision maker, so we would naturally expect $v(x)$ to be always non-negative and monotone non-decreasing. A more interesting question can be: how fast does $v(x)$ increase as the amount of information $x$ increases, if at all. As we will see in some of the results surveyed in this section, sometimes a seemingly small amount of information can lead to significant performance improvement. Conversely, performance improvement could quickly plateau after obtaining ``enough'' information, beyond which more information only brings small, marginal benefits. 

Once we understand how much benefit information can bring, an immediate follow up question would be how to achieve the optimal performance improvement $v(x)$ using efficient algorithms. A notion that is suitable for most engineering applications is to require that the resulting decision policy to be computationally tractable. A taller order might be warranted when the application involves humans, such as in a call center or health-care setting, we would ideally like the algorithm to  be simple and intuitive from the perspective of a human operator. 

\begin{figure}[h]
	\centering
	\includegraphics[scale=0.9]{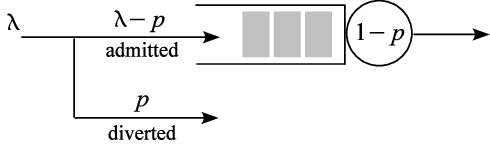}
	\caption{A queueing admission control model with future information \citep{spencer2014queueing}.}
	\label{fig:queue_future}
\end{figure}

\subsection{Improving Queueing with Future Information}

Our first example concerns  admission control in a queueing system using future information \citep{spencer2014queueing}. We will look at an operator who may access to a third-party oracle (via predictive algorithms, machine learning, or simply a side-channel) that offers a look-ahead window containing future arrival and departure patterns. We will see that in this example, not only does queueing delay improve with future information, but the function $v(x)$ experiences a sharp increase early on, before petering out, suggesting that the system can benefit a lot from having access to a moderate amount of information. 

The model is depicted in Figure \ref{fig:queue_future}. An single stream of jobs arrive to a queue following a Poisson process with rate $\lambda  \in (0,1)$. Service variability is modeled by an independent Poisson process of rate $1-p$: suppose that there is a jump in this process at time $t$, then the queue is reduced by $1$ if it is not empty, or stays at zero, otherwise. It is easy to show that in this system  the queue length process, $Q(t)$, would evolve according to the same law as the number of jobs in system (including in service) in a conventional $M/M/1$ queue, with arrival rate $\lambda$, service rate $1$, and mean job size $1/(1-p)$.

Now, notice that if the arrival rate $\lambda$ exceeds the service rate $1-p$  the queue length will eventually grow to infinity. We would thus assume that the policy designer is to make some admission decisions on a dynamic basis: for each arriving job, the algorithm is to decide whether the job is to be admitted to the queue, or \emph{diverted}, in which case the job leaves the system directly. Because diversions are costly (the diverted job is either not receiving the needed service, or needs to be served elsewhere), we impose a constraint that the average rate of diversion be at most $p$. This model of admission control has a long history and has been studied in various applications, ranging from Internet congestion control to ambulance divergence for Emergency Departments; see references in \citep{stidham1985optimal, spencer2014queueing, xu2016using}. 

A key question is how to control admissions in such a way that minimize the waiting time experienced by the admitted jobs, while obeying the diversion rate constraint.  Everyday intuition would suggest that jobs should be turned away when the queue is already very long, and conversely, we should probably admit the job if the queue is nearly empty. In other words, we could implement a threshold-policy where a job is admitted if any only if the queue length at the time of arrival is less than a certain threshold, and the threshold would be chosen to be smallest one that results in a feasible admission rate. Showing that this is indeed optimal is non-trivial, and was elegantly carried out in \cite{stidham1985optimal}. In particular, under the optimal policy (which uses a threshold rule), the average steady-state queue length scales as: 
\begin{equation}
\EE{Q}\sim \Theta\p{\log {\frac{1}{1-\lambda}}}, \quad \mbox{as $\lambda \to (1-p)$}.
\label{eq:fut_Q_online}
\end{equation}
Here, the limit $\lambda \to 1-p$ corresponds to the conventional heavy-traffic regime, where the arrival rate of the admitted traffic, $\lambda-p$, approaches the service capacity $1-p$. Note that as $\lambda \to 1-p$, the queue length explodes to infinity, as one would expect from standard heavy-traffic theory of queues, albeit at a more modest, logarithmic rate compared to the typical $1/(1-\lambda)$ heavy-traffic delay scaling.  

Let us now introduce future information in this model. Suppose now that the decision maker is equipped some advanced knowledge of the future arrivals and potential services in the form of a lookahead window: at any time $t$, we will assume that the algorithm is able to access the \emph{realizations} of both the arrival and service Poisson processes in the time window $[t, t+L)$. That is, the decision maker not only has at their disposal how long the queue is now, but how arrival and service will materialize in the next $L$ units of time.  We would expect that the availability of such future (aleaoric) information should allow the algorithm to plan more proactively and improve performance. But how large of an improvement can we get, and how much future information do we need in order to make such improvement possible? 

It turns out that future information does make a big difference. Specifically, it is shown in \cite{spencer2014queueing} that there  exists some universal constant $a>0$, such that if that the size of the lookahead window $L \geq a \log\frac{1}{1-\lambda}$, then the queue length under an optimal policy satisfies: 
\begin{equation}
\EE{Q} \to \frac{1-p}{p}, \quad \mbox{as $\lambda \to (1-p)$}.
\label{eq:fut_info_upper}
\end{equation}
That is, equipped with a sufficient amount of future information, the heavy-traffic queue length does not diverge, but instead converges to a finite constant, a sharp departure from the scaling in  \eqref{eq:fut_Q_online}. 

We have thus seen that having access to some future information could bring potentially massive performance improvement. However, \eqref{eq:fut_info_upper} is only half of the picture, as we might want to know whether  the same performance improvement can be attained with much less information. The answer turns out to be negative, as demonstrated by the following lowerbound by \cite{xu2015necessity}: there exists another positive constant, $b$, $b<a$,  such that if the lookahead window size $L\leq b \log\frac{1}{1-\lambda}$, then the queue length under \emph{any} algorithm must satisfy: 
\begin{equation}
\EE{Q}\sim \Omega \p{\log {\frac{1}{1-\lambda}}}, \quad \mbox{as $\lambda \to (1-p)$}.
\label{eq:fut_Q_future_lower}
\end{equation}

Comparing the above with \eqref{eq:fut_Q_online} and \eqref{eq:fut_info_upper}, we see that, quite surprisingly, the performance deteriorates to the same level as an online algorithm, as soon as the lookahead window  size $L$ drops  by even a  constant factor, $a/b$. In other words, $\log\frac{1}{1-\lambda}$ appears to capture a critical threshold for performance-relevant future information, where queueing delay is highly sensitive with respect to whether $L$ is greater or less than this level. 

The above results also reveal an interesting duality between information and stead-state queue length. While one may expect future information to help reduce the steady-state queue length, it may be surprising that they are related to each other so strongly as to be almost interchangeable. Examining \eqref{eq:fut_Q_online} through \eqref{eq:fut_Q_future_lower} shows that the system can essentially operate in one of two regimes, as the system load $ \lambda$ approaches $ 1 $: 
\begin{enumerate}
	\item \emph{Information-rich}: if the lookahead window size grows faster than $ \Theta \p{\log \frac{1}{1- \lambda} } $, then the steady-state queue length would be bounded,
	\item  \emph{Information-scarce}:  if the  lookahead window grows slower than $\Theta \p{\log \frac{1}{1- \lambda} } $, then the steady-state queue length would grow at rate $ \Theta \p{\log \frac{1}{1- \lambda} }$. 
\end{enumerate}
That is, the operate ``pays'' a cost of $\Theta \p{\log \frac{1}{1- \lambda} }$ either in information or queue length.

\paragraph{Insight on Policy Design} We may also be interested in knowing how policy design should adjust to the availability of future information and forecasts. Do we need to drastically revamp our usual intuition, or are we only looking at minor tweaks to existing policies? 

The reality is likely the former. One of the most striking findings of \cite{spencer2014queueing} is that not only is the optimal control with future information different from the one without, but the difference is so large that it would almost appear counter-intuitive. As discussed earlier, the  intuition for an online version of the admission control problem suggests that a good admission policy should follow some threshold rule and admit arrivals to the queue only when the queue is short. In contrast, \cite{spencer2014queueing} show that when future information is available, the optimal admission policy does almost the opposite: it is more likely reject jobs when the queue is short, and admit when the queue is long. In the extreme case where the lookahead window size is infinite, the optimal policy would \emph{only} reject jobs when the queue is empty. Why does future information lead to such drastic changes in optimal admission rules? 

In some sense, this departure is but an instance of a broader shift from being \emph{reactive} to \emph{proactive} as future information becomes part of our decision-making apparatus.  In the online setting, since the decision maker has no knowledge of the future realizations of arrivals and services, the build-up in the queue serves as the only warning of congestion. Consequently, the decision maker can react to bursts in arrivals only \emph{after} they have already occurred. When future information is available, however, the emphasis of the decision maker naturally shifts from what has happened to what will happen, and as such, the current ``state'' of the system (e.g., queue length) will carry less significance as the amount of future information grows. In the admission control problem, if the decision maker is to foresee future arrivals, then rejecting jobs that occur at the \emph{beginning} of a burst of arrivals will prove far more profitable than rejecting those jobs that arrive later in the same bursty period. This is because rejecting early arrivals reduces waiting for all subsequent arrivals, whereas rejecting the later arrivals does not have the same ``domino effect.'' It is for this reason that the optimal policy under future information tend to be more aggressive in turning away jobs when the queue is small; not because the queue's being small in itself forebodes future congestion, but simply that early arrivals in a burst of jobs tend to, by definition, arrive when the queue is small. Clearly, with a lack of foresight, it would not have been possible for an online policy to have such discernment. 

While these specifics ways in which policy design changes as a result of more future information may or may not generalize to other stochastic systems, a broader point to be made here is that variations in information structure likely will require us to re-evaluate our usual intuition, and sometimes in fairly drastic ways.

 \subsection{Communication and Memory in Load Balancing} 
 
Our second example from \cite{gamarnik2018delay} shifts the focus of information from the future to that of the present and past. Here, the decision makes may not even have the complete information about the current state of the system. A typical application that motivates such consideration is that of a large data center, where it is often expensive, if not impossible, for a scheduling algorithm to have at its disposal the real-time state  information on all servers and queues. In this case, the lack of accurate state information orignates from two possible sources: 
\begin{enumerate}
\item Limited communication: the decision maker may not be able to communicate frequently with all the queues and servers. 
\item Limited memory: there is not enough memory to retain all past information.
\end{enumerate}

\begin{figure}
\centering
\includegraphics[scale=0.25]{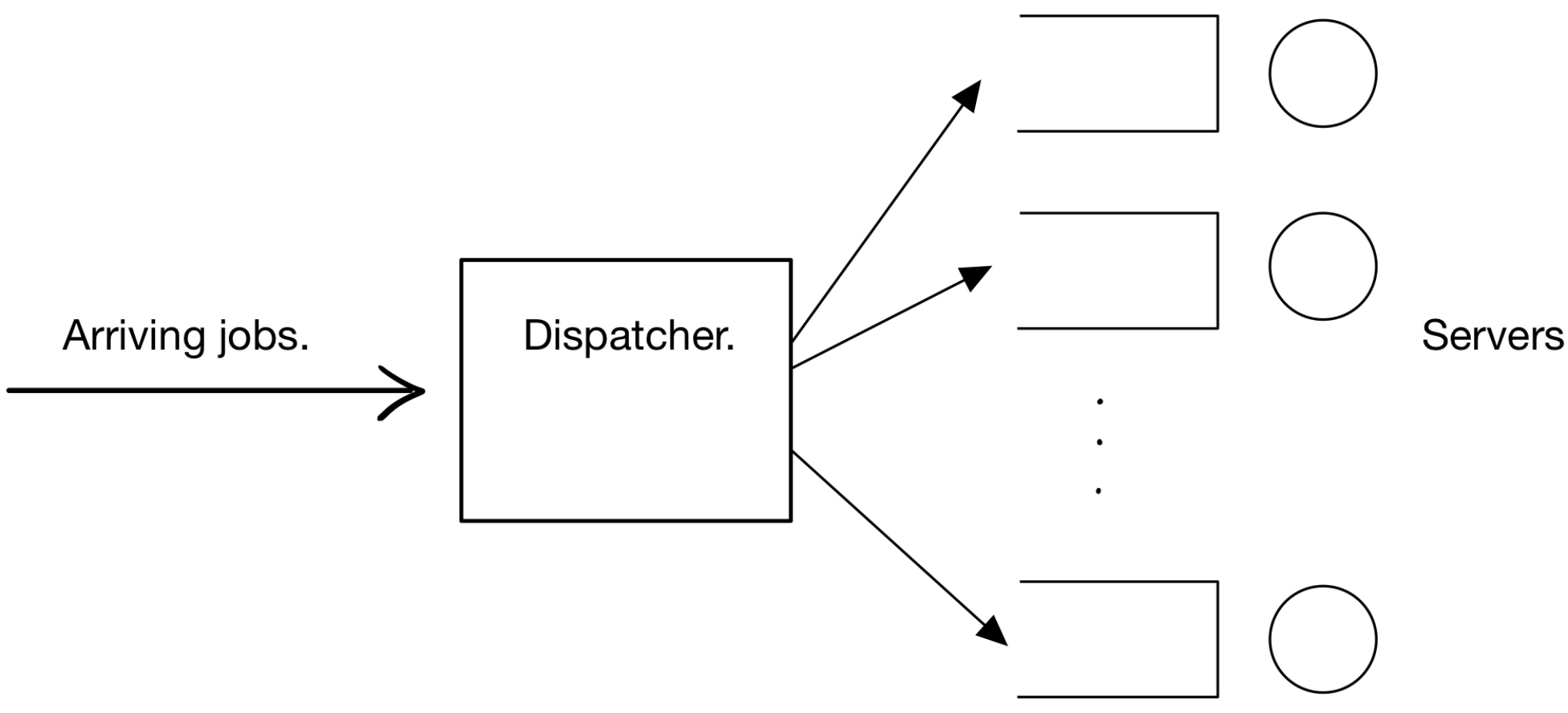}
\caption{Load balancing with a single dispatcher (Figure 1, \cite{gamarnik2018delay}).}
\label{fig:load_balance}
\end{figure}

To make matters more interesting, it turns out that the effects of communications and memory are often intertwined. With unlimited communication capability (and suitable Markovian assumptions), the true system state is readily accessible and there would be little need for memory. Conversely, when communications are limited, memory becomes crucial as it compensates for the lack of real-time information by piecing together last observations to form a better estimate of the system state.  Considering that communication and memory can both be prized resources in large-scale queueing networks,  how they jointly impact performance is an important question.
 
 The tension between communication, memory and system performance was  elegantly captured by a model proposed by  \cite{gamarnik2018delay} in the context of load balancing. The system is that of a standard load balancing setup, where a single dispatcher routes jobs, arriving at rate $\lambda n$, to a collection of $n$ parallel servers each operating at unit speed. The dispatcher however does not have access to the queue lengths at all the servers by default, but must rely on messages sent from the servers to gather such information.   A main innovation of this model is its systematic accounting of the communication and memory overhead. 
 
 \begin{enumerate}
 \item Communication model: when a server idles, it sends messages to the dispatcher according to a Poisson process of rate $r$. Each message contains the identity of the server encoded in a $ \log_2(n) $-bit binary string. The server sends nothing if it is currently busy. 
 
 \item Memory model: the dispatcher is equipped with a finite-sized memory bank of $c \log_2(n)$ bits.  Given that it takes $\log n$ bits to specify the index of a single server, this is effectively the same as the dispatcher being able to store the identities of $c$ (idle) servers. Whenever the dispatcher receives a new message, it adds the identity of that server to its memory, and discards it if the memory is already full. 
 \end{enumerate}

Under the above memory and communication model, \cite{gamarnik2018delay} study the delay performance of the following natural routing policy. Upon the arrival of a new job, the dispatcher randomly chooses a server ID from its memory bank, if it is non-empty, and sends the new job to the said server while erasing the ID from the memory. If the memory bank is empty, then the dispatcher simply routes the job to a randomly chosen server. The main result of \cite{gamarnik2018delay} characterizes the resulting delay as a function of the level of memory and communication available. In particular, they consider the regime as the number of servers $ n $ tends to infinity, and distinguish whether the average system-wide queueing delay vanishes in the limit. 

Their main results are summarized in Figure \ref{fig:delay_mem_comm}.  There are several interesting takeaways. First, as one would expect, delay improves as the amount of memory or message rate increase. For instance, in the ``High message regime,'' the messaging rate of each server tends to infinity as $ n $ increases. This high rate of message allows the system to achieve vanishing delay even when the dispatcher has only a limited memory size, capable of storing only a bounded number of server IDs ($ c >0 $ but bounded). In contrast, for the same memory size, if the messaging rate per server remains bounded, then delay becomes strictly positive in the large-server limit. The same complementarity between memory and communication is partially mirrored in the ``High memory regime,'' where the servers send messages at a constant rate of at least $ \lambda $, but the dispatcher can remember an unbounded number of server IDs. Here, the lack of communication is made up for by an abundance of memory, and vanishing delay is again achievable. 

A second, and perhaps more striking, insight is that the roles of memory and communications in this model are not entirely symmetrical. In the ``High memory regime,'' the messaging rate per server is assumed to be bounded. In this case, having more memory alone is not sufficient for achieving vanishing delay. In particular, assuming that the number of server IDs the memory can store grows to infinity as $ n $ increases ($ c \to\infty $), then vanishing delay is achieved if and only if the messaging rate per server is at least $ \lambda $. Recall that $\lambda$ is exactly the average arrival rate per server. So another interpretation of this result is that the servers need to communicate its idling status  at the rate of \emph{one message per job} in order for more memory to have a meaningful impact on delay performance. 

It is worth-noting that the analysis carried out in \cite{gamarnik2018delay} applies to the specific dispatching policy. While it is conceivable that some other policy could utilize the same memory and communication resources more efficiently, a follow up paper \citep{gamarnik2020lower} shows that any potential improvement will be limited. They prove that under a certain natural symmetric assumptions, any policy operating in the ``Constrained regime'' (see Figure \ref{fig:delay_mem_comm}),  where the messaging rate per server stays bounded and the memory size is on the order of $ \log n $, necessarily incurs a strictly positive limit delay as $ n\to \infty $.

\begin{figure}[h]
\centering
\includegraphics[scale=0.3]{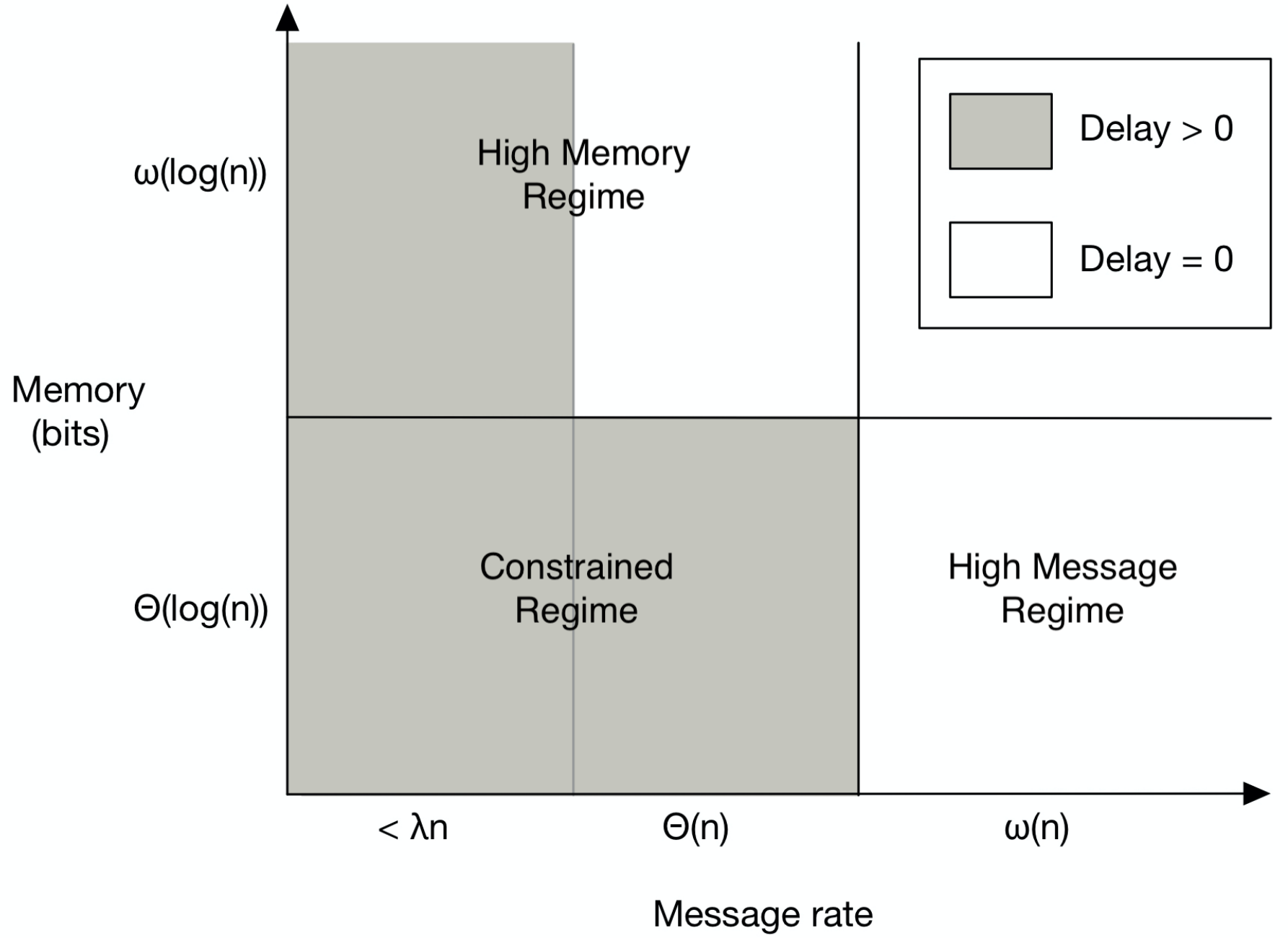}
\caption{Delay as a function of memory and communication (Figure 3, \cite{gamarnik2018delay}).}
\label{fig:delay_mem_comm}
\end{figure}

The interplay between memory, communication and system performance in a queueing network has also been investigated by \cite{xu2020information} in a different system model. They study dynamic resource allocation in a single-hop stochastic processing network model under imperfect communication and limited memory. Unlike \cite{gamarnik2018delay}, where communication is captured by messaging rates, \cite{xu2020information} considers a more general communication model using memoryless Shannon channels in information theory, which would allow for capturing both rate-limited and noisy communication. The focus of \cite{xu2020information} is centered around understanding how to achieve the maximum throughput region when the decision maker can access the state of a queueing system only through a noisy and rate-limited channel, when the algorithms are memory-limited.  The main results of \cite{xu2020information} consist of a family of Episodic MaxWeight policies that (approximately) achieve the maximum stability region under various combinations of memory and channel conditions. As another departure from the above load-balancing model, in which only the dispatcher possesses a memory, both the queues and  scheduler in \cite{xu2020information} have their own memory bank. A surprising conclusion from the results in \cite{xu2020information} is that it is not only \emph{how much} memory that matters, but also \emph{where} the memory is located:  granting more memory to the scheduler, rather than to the queues, leads to a significantly larger impact on the system's stability region.

\subsection{Literature Review and Discussions}

There has been a growing body of literature that examines the impact of predictive information in queueing networks. \cite{xu2016using} studies admission control in Emergency Departments where the operator may have access to limited and noisy future information. They propose a family of admission control policies that provably outperforms the optimal online policy at all arrival rates, in contrast to the policy in \cite{spencer2014queueing} which is only optimal in the heavy-traffic limit as $ \lambda \to 1 $. Using simulations based on historical hospital admission data, \cite{xu2016using} also demonstrate that leveraging future information can lead to substantial improvements in queueing delay over online policies even if the predictions are noisy and the lookahead window size limited.   \cite{ata2020optimal} applies a version of the model with future information of \cite{spencer2014queueing} to the problem of admission control in a call center with call-back options. A key insight there is that the type of proactive admission policies that are shown to be optimal in the over-loaded setting of \cite{spencer2014queueing} ($ \lambda > 1-p $) remains effective in an under-loaded system ($ \lambda < 1-p $).

In some cases, having information about future arrivals goes hand-in-hand with being able to \emph{intervene} on a more proactive basis. \cite{huang2015backpressure} examine proactive scheduling policies that improve the standard BackPressure algorithms in a queueing network. In this model, future arrivals to a queue can be not only predicted, but also served, in advance. In a similar spirit, but different domain,  \cite{hu2020optimal} uses a queueing model to study how to provide proactive care in a healthcare application, where the care provider strives to achieve better treatment outcomes by proactively intervening on a patient who is at the risk of deteriorating in the near future. 

There is a vast body of literature on dispatching policies  for load balancing that aim to reduce the communication overhead while maintaining desirable delay performance, including the celebrated power-of-$ d $-choices algorithm, where the dispatcher queries $ d $ randomly sampled servers and sends the arriving job to the least loaded among them. \cite{mitzenmacher2002load}, and more recently \cite{anselmi2020power, hellemans2020performance}, study the impact of dispatcher's memory on the performance of this class of algorithms. To the best of our knowledge, \cite{gamarnik2018delay} is the first to formally model the joint impact of communication \emph{and} memory. 



\section{Reinforcement Learning in Queues}\label{sec:RL}


We now discuss the challenges in applying Reinforcement Learning theory to Queueing systems.
This discussion acts as a literature review of recent progress in the field. We highlight novel approaches in applying reinforcement learning in the context of queueing systems. 

We will assume some knowledge of the formalism of Markov decision processes and reinforcement learning, such as $Q$-learning, policy gradients, function approximation, as well as the important role of simulation in training models. 
To very briefly summarize, a Markov decision process (MDP) is a Markov processes where the evolution of the process is also influenced by actions which are chosen at each time step by a policy. The purpose of the policy is to maximize the expected cumulative rewards received by the process. The optimal policy and its value can be calculated using the Bellman equation. In practice, the evolution of the MDP and its rewards is often unknown and instead must be estimated from data. 
The suite of algorithms for this joint estimation and optimization is referred to as reinforcement learning (RL). 
Often the MDP objective has an unreasonably large complexity, thus different forms of function approximation must instead approximate the Bellman equation or directly perform an approximate policy optimization on the MDP objective.
Since the early '90, and particularly over the last 5 years, neural networks have been a commonly used framework for function approximation.
Successful policies can require large amounts of data to be learned. 
Thus policy training is often performed via simulation.
We refer the uninitiated reader to text such as \cite{sutton2018reinforcement}, \cite{bertsekas2019reinforcement} and \cite{szepesvari2010algorithms} for introductory accounts. 

Below we emphasize the role of queueing models as an important application area for early reinforcement learning models. We then discuss research challenges in the theory of reinforcment learning when applied to queueing systems. We review recent literature that addresses a number of these challenges and finally emphasize their importance in emerging application areas. 

\subsection{Queueing in Early RL Literature.}
The application of dynamic programming and reinforcement learning to queueing systems has existed almost since inception. A good example is the work of \cite{crites1996improving}. The paper considers allocation of a set of elevators in a building in order to minimize waiting time.  The work is amongst the earliest examples of Q-learning with neural networks.
The paper highlights a number of specific challenges that can occur within a queueing system:
time is continuous and so the times at which decisions are made is a part of the modeling process (in order to regulate the branching factor); there is often incomplete state information (for instance about the number of people waiting); there is decentralized coordination required between different agents; and the stochastic and potentially non-stationary arrival of passengers must be modelled. Many of these issues remain challenges for applying reinforcement learning in queueing systems. 

Works of \cite{zhang1995reinforcement} consider the application in job shop scheduling problems. Here complex planning constraints are approximated. \cite{singh1997reinforcement} gives an early example of reinforcement learning applied to dynamic channel allocation. Here a channel assignment problem in a cellular telecommunication system is modeled as a dynamic program and a reinforcement learning solution with function approximation is found to outperform the best heuristics.
The application of neural network reinforcement learning to inventory management is considered by \cite{van1997neuro}.
 In addition to neural network approximations, the cross entropy method when first applied to reinforcement learning was applied to inventory control as an example application by \cite{mannor2003cross}. 
From these works it is clear that after inception new reinforcement learning methods can readily be applied to a queueing system.

The success of these methods does depend greatly on the predictability of the underlying queueing process. There is a risk of over-optimizing a system for incorrect parameters. As a related anecdotal example, we consider the routing of telephone calls. Here AT\&T would perform periodic learning and optimization in order to determine dynamic routing of telephone traffic, see \cite{391435}. However, it was found that Dynamic Alternative Routing (DAR) \cite{gibbens1988dynamic}, where routing decisions are made based on local blocking decisions, was found to be much more successful. Here, centralized learning and optimization of (epistemic) parameters performed worse than decentralized state dependent (aleatoric) optimization. This is because the local state provided a better representation of the decisions to be made rather than the estimation of the state evolution. The role of current information in decision making, as discussed in the previous section, remains important here. 

As we will discuss next, it continues to be the case that queueing systems provide a rich set of models where simulation and reinforcement learning methods are applicable. Nonetheless it is also reasonable to say that even in simple settings mathematical guarantees, correctness and convergence of reinforcement learning algorithms is not well understood.\footnote{Beyond canonical assumptions such as finite state space with each state having a lower bound on the proportion of time visited.} 

\subsection{Challenges in applying Reinforcement Learning Theory to Queueing Networks.}
\label{RLChall}
There is a growing interest in the application of reinforcement learning to queueing systems, 
but there are evidently challenges. We need algorithms that learn parameters that we know to be fixed but provide flexibility to model variability. To do this, as the previous section suggests, models require correct and extended notions of state information. For instance information of future arrivals can improve decision making for these transient parameters, or providing contextual information on service rates. However, decentralization, continuous time modeling, incomplete state information, and simulation model misspecification can all inhibit our ability to leverage the benefits of reinforcement learning in a queueing system. 

Nonetheless the potential practical power of reinforcement learning methods remains appealing. Many features of a queueing model cannot be well modeled in closed form and so simulation- and data-driven solutions have the ability to capture features that are not well represented by handcrafted models. (For example, it would be better to predict the speed up and slow down of vehicles at a junction rather than create a closed-form mathematical model.)
Before discussing recent literature, we highlight challenges that apply to the development of reinforcement learning for queueing systems:
\begin{itemize}
\item \textbf{Unbounded State Space.} Queueing systems often have unbounded state spaces, while theoretical results on reinforcement learning assume large but finite state spaces. 
Function approximation can help alleviate this; however, simulations for small queue sizes may lead to sub-optimal performance for a system in higher-load, which is often the scenario that the system designer cares most about. 
\item \textbf{Unbounded State Cost.} Even if function approximation may assist, queueing systems can have long regenerative cycles and thus large mixing times. Thus, if we wish to minimize queue size or delay, then the associated costs will be unboundedly large.
\item \textbf{Stability Guarantees.} We may wish to guarantee a level of first order performance. We may wish to optimize cumulative rewards subject to stability over a range of loads. There are relatively few guarantees of robustness to parameter uncertainty in a trained reinforcement learning model.
\item \textbf{Non-stationarity.} Parameters may change in time. Daily variations can be modeled through the use of contexts. However, in settings such as communications, fluctuations in load can be erratic, which again leads to the risk of learning one part of the state space at the cost of performance out of sample.
\item \textbf{Decentralized Coordination.} A queueing systems often consist of interconnected, geographically and logically diverse components. Frequently, centralized coordination is not possible and simulation of the whole system may not be possible either. The optimization of individual components should contribute to the overall optimality of the system as a whole.


\item \textbf{Dependence on Simulation.} As with other application areas such as robotics, a simulation driven solution may used to train models but these simulated behaviors might not found in reality. In a road traffic setting, a simulator might incorporate the physical changes of a vehicle but may not account for events such as change lanes or drivers accelerating through an amber light. 

\item \textbf{Continuous Time Modeling.} Although it is not an unassailable challenge, it is worth noting that many queueing applications are continuous time and thus appropriate decision epochs must be defined. There is a trade off between reducing branching factors while allowing for a suitably rich set of policies to be implemented over time. 
\item \textbf{Incomplete State Information.} In many settings, the state of a queueing system may not be fully observable or may only be locally observable. In internet congestion in one part of a system may only be sensed locally. In a data center a dispatcher may only be able to gain information on a subset of its servers. We may not know how may people are waiting at a traffic light when a button is pressed. 
\end{itemize}

The above list is by no means exhaustive but covers many issues that RL theory does not currently address for queueing systems.

%
%
%
%
%

\subsection{Recent literature.}

Here we review recent  theoretical works that look to bridge the gap between theory and practice for reinforcement learning applied to queueing systems.

The paper of \cite{shah2020stable} looks to address the issue of infinite state spaces found in queueing systems. The method considers a setting where their is know to be a Lyapunov function providing stability for some (optimal) policy. 
The approach pays particular emphasis on evaluating actions for, thus far, unvisited states. Here, the online algorithm does have access to an oracle (or simulator) that can predict future performance from the current state.
The paper of \cite{liu2019reinforcement} take a different approach by truncating the state-space on which reinforcement learning can be applied. Here it is assumed that there is a known but perhaps sub-optimal policy (such as Max-Weight) that can stabilize the queueing network. 
A standard RL algorithm is used for learning when the state space is sufficiently small; however, 
once the Lyapunov function associated with this policy increases beyond some threshold the sub-optimal stabilizing policy is applied. 

Although not explicitly analyzing queueing networks, the work of \cite{qu2020scalable} does much to analyze the effects of decentralized learning and control in a network setting. Here is it proven that if separate nodes of a network implement actions and if the impact of actions at neighboring nodes is delayed, then applying reinforcement learning locally yields a policy that is within a constant factor of the globally optimal policy. The factor in the degradation can be given by either the discount factor of the MDP or by the mixing time of the Markov decision processes. 

The previous papers a focus on tabular reinforcement learning methods. However, in practice, function approximation is more commonly applied to yield a solution. 
The paper of \cite{dai2020queueing} analyses on the application of deep reinforcement learning to queueing systems. The paper considers the implications of proximal algorithms such as Proximal Policy Optimization and Trust Region Policy Optimization. Here algorithms estimate and optimize in a similar manner to the Policy Iteration algorithm.  \cite{dai2020queueing} focus is on the application to queueing problems with an average cost objective under a moderate to heavy load. The authors present a number of Lyapunov function arguments that modify the policy optimization step of their algorithm. Further they introduce new variance reductions techniques for their value function estimates. These improvements result in pronounced performance benefits for their algorithms in a queueing setting. 

In addition to proximal algorithms, policy gradient approaches have also been considered in the context of queueing examples by \cite{zaki2021improper} and \cite{zaki2021better}. Here the task of the MDP formulation is to discover the best mixture amongst a given set of controllers. Under deterministic gradient updates regret bounds are derived. 

An under area of investigation is if know formulas can be used to improve the optimization of a queueing system. Albeit not a strictly reinforcement learning setting, the paper of \cite{dieker2017optimal} should how closed form queueing expressions can be incorporated into stochastic gradient descent procedures in order to expedite the optimization of a queueing network model.

A similar approach exploiting the structure of the queueing system is to simply start with an MDP queueing system with a known solution and then to use this structure along with parameter learning in order to establish theoretical bounds. This approach is taken in the case of inventory control see \cite{agrawal2019learning}. Here the base-stock policy is known to be optimal and the value function and cost functions are know to be convex. Through this precise regret bounds are established for an online policy in comparison with the best base-stock solution. 


\subsection{Application Areas}

We review a range of queueing application areas where reinforcement learning approaches have been investigated. As discussed applications include healthcare, manufacturing, transportation and communications. The studies below for the most part focus on training reinforcement learning algorithms through simulation. 

\cite{dai2019inpatient} consider the application of reinforcement learning for the optimization of hospital patient inflows. Here the overflow decision problem is cast as an average cost MDP. An approximate dynamic program is formulated and verified. 
Emergency response and vehicle assignment approaches to have been applied as MDPs by \cite{van2018real}. 
Reinforcement learning formulations of this are considered by \cite{lopez2018distributed}.
\cite{feng2020scalable} develop the work of \cite{dai2020queueing} and apply this to ridesharing systems. See \cite{qin2019deep} for a review and discussion on the application of reinforcement learning to Didi's ride-sharing fleet. 

Traffic signal control is a further transport applications area. The SurTrac system in the US applies estimation along with a forward dynamic programming solution, see \cite{smith2013smart}. Companies such as Vivacity explicitly apply deep reinforcement learning in their traffic control system.\footnote{https://vivacitylabs.com/technology/junction-control/} See \cite{cabrejas2021reinforcement} for a recent study that demonstrates the ability of deep RL to outperform current commercial traffic signal controllers. 

Studies of distributed reinforcement learning for manufacturing processes are considered in \cite{qu2018dynamic}. Here the stochastic processing network model of \cite{dai2020processing} is considered within a reinforcement learning framework. Further studies on machine repair are given by server speed optimization are given by \cite{prabuchandran2016reinforcement}.  Semiconductor manufacturing is investigated by \cite{park2019reinforcement}.

Applications in communications systems and data centres was initiated by \cite{tesauro2005online}. See also \cite{tesauro2005utility} and \cite{tesauro2006hybrid} for further data centre applications of RL. The area has received further attention with Deepmind's application in Google Data centers.\footnote{\url{https://deepmind.com/blog/article/deepmind-ai-reduces-google-data-centre-cooling-bill-40}} A more recent study focusing more on congestion control protocols is given by \cite{tessler2021reinforcement}.


The training reinforcement learning algorithms in large state spaces often requires simulated data. However, there are few standardized simulation environments for queueing systems. For instance, the generic simulators like Open AI's gym environment are required; the ORSuite simulator is a recent simulation environment that looks to bridge this gap, see \cite{ORSuite}.  Within specific fields there are respected simulation environments like ns3 simulator\footnote{\url{https://www.nsnam.org/}} for networking, SIMGrid for high performance computing\footnote{\url{https://simgrid.org}} or Vissim\footnote{\url{https://www.ptvgroup.com/en/solutions/products/ptv-vissim/}} and SUMO\footnote{\url{https://sumo.dlr.de/docs/index.html}} for traffic control. Also see \cite{pan2021high} for a recent simulator for hospitals. A hope is that established simulators and problem sets emerge, so that the performance of different polcies can be compared. 

Due to their structure some queueing problems still remain challenging for reinforcement learning approaches. For instance the DAG scheduling problem in heterogeneous computing remains challenging due the large array of potential DAGs that a platform must support. Nonetheless, dynamic programming heuristics remain popular, see \cite{topcuoglu2002performance}. Further, in many practical applications (air traffic control or hospital assignment) there is human level control which governs part of the dynamics and the costs and rewards of the process. Thus a set of solutions are likely required to be readily available. These factors may limit that ability to consider completely model-free learning to be applied some application areas.

Nonetheless a more empirical approach to research on queues and queueing networks will likely stimulate new research. A hope is that queueing theoretical research can be development more closely with the increased acceptance and use of exploratory studies.

\section{Conclusions}

In this tutorial, we have seen how classical queueing algorithms such as MaxWeight and Back Pressure can viewed as an application of Blackwell's Approachability Theorem. Through this we see connections between queueing and online learning theory. This online learning approach does not require the explicit statistical estimation of parameters. However, to consider more general models, we see that the underlying parameters such as service rates must be estimated. We discuss this next and develop connections between statistical learning of sequences of data and the Lindley's recursion for the G/G/1 queue. Through this we develop a notion of regret for a single server queue. As an example we showed how the perceptron algorithm could be used for the classification of customer service at a queue and we bound the impact that this has on the queue length of the policy implemented and the (unknown) optimal policy. 
We then discuss the role of state estimation when compared with parameter estimation. Here we discussed two examples. The first considers the importance of future information in making access control decisions. The second considers the importance of memory in load balancing systems. Finally, we review the application of reinforcement learning highlighting the importance of reinforcement learning to solve practical queueing challenges as well as highlighting current gaps between theoretical results in reinforcement learning and their applicability to modern queueing problems. 

Certainly we anticipate that over coming years approaches to information, learning and control will grow in their applicability to queueing networks, and we hope that this tutorial provides a comprehensive introduction and reference to modern methods in this emerging area. 
\appendix
\section{Appendix}

\subsection{Proof of Blackwell's Approachability Theorem}
We now restate and prove Blackwell's Approachability Theorem. 
\begin{theorem}[Blackwell's Approachability Theorem]\label{Blackwell:Blackwell} The following are
equivalent
\begin{enumerate}
\item ${\mathcal Z}$ is approachable.
\item For every $q$ there exists $p$ such that $R(p,q)\in {\mathcal Z}$.
\item Every half-space containing ${\mathcal Z}$ is approachable.
\end{enumerate}
\end{theorem}
\Beginproof

First we need to show that the decision $d(t)$ given in \eqref{dRa} exists. To this end, we recall that the Minimax Theorem states that for a $p\times q$ matrix $M$ it holds that
\begin{align*}
\max_{a\in \mathcal A}  \min_{d \in\mathcal D} d^\top M a = \min_{d \in\mathcal D} \max_{a\in \mathcal A} d^\top M a  \, .
\end{align*}
A less compact way of writing the Minimax Theorem is that for every $v\in\mathbb R$ 
\begin{align}\label{MM2}
  \forall a \in \mathcal A\,\,\exists d\in\mathcal D\, \,\, \text{ such that } d^\top M a \leq v \iff \exists d \in \mathcal D\,\,\text{ such that }  \forall a \in \mathcal A,\,\, d^\top M a \leq v \, .
\end{align}
(Informally this states that if there exists a good choice of $d$ for each $a$ then there exists a good choice $d$ for all $a$.)

Now if a half-space, $\mathcal H = \{ r \in \mathbb R^n : \hat n \cdot r \leq v \}$, that contains $\mathcal Z$ is approachable, then it must be that when the adversary choses $a \in\mathcal A$ there exists a choice $d \in \mathcal D$ such that 
\begin{align}\label{MMR}
  \hat n \cdot R(d,a) \leq v \,.
\end{align}
If this were not true then $\mathcal H$ certainly would not be approachable. 

The statement \eqref{MMR} can be recast in terms of the Minimax Theorem. Specifically, we can define the matrix $M$ with components $M_{ij} := \hat n \cdot R_{ij}$. Thus by definition $\hat n \cdot R(d,a) = d^\top M a$.
Equation \eqref{MMR} now states that $ \forall a \in \mathcal A\,\,\exists d\in\mathcal D$ such that 
\begin{align*}
d^\top M a \leq v  \, .
\end{align*}
Thus by the Minimax Theorem, \eqref{MM2}, there exists $d\in \mathcal D$ such that 
\begin{align*}
  \hat n \cdot R(d,a) = d^\top M a \leq v, \qquad \forall a \in \mathcal A.
\end{align*}
So, we now know that for all hyperplanes $\mathcal H$ containing $\mathcal Z$, there exists a $d\in \mathcal D$ such that 
\begin{align}\label{nR}
  \hat n\cdot R(d,a) \in \mathcal H, \qquad \forall a \in \mathcal A.
\end{align}
This establishes the existence $d(t)$ as described in \eqref{dRa}. 

 Next we show that the sequence $d(t)$, $t\in\mathbb Z_+$ is such that $\bar Q(t)$ approaches $\mathcal Z$. 
\begin{align}
& ||\bar Q({t+1})- P({t+1})||^2 \notag \\
\leq & || \bar Q({t+1}) - P(t)||^2  \notag \\
= & || \bar Q({t+1}) -\bar Q({t}) ||^2 + || \bar Q(t) - P(t) ||^2 +2 (\bar Q({t+1})-\bar Q(t))\cdot (\bar Q(t)-P(t)) \, \notag \\
\leq & 
2\frac{R^2_{\max}}{(t+1)^2}
+ || \bar Q(t) - P(t) ||^2 +2 (\bar Q({t+1})-\bar Q(t))\cdot (\bar Q(t)-P(t)) \, .
\label{Blackwell:parallellogram1}
\end{align}
The first inequality holds since the projection $P(t+1)$ is by definition closer to $\bar Q(t+1)$. The equality is the parallelogram rule.
The final inequality follows since,
by the definition of $R_{\max}$ and since
\begin{equation*}
	\bar Q({t+1})=\frac{1}{t+1}R(d(t+1),a(t+1)) + \frac{t}{t+1}\bar Q(t)\, .
\end{equation*}
Further,
applying the above equality, to the final term in \eqref{Blackwell:parallellogram1} gives
\begin{align}
& (\bar Q({t+1})-\bar Q(t))\cdot (\bar Q(t)-P(t)) \notag \\
&= 
\frac{1}{t+1}(
{R(d({t+1}),a({t+1}))} - {\bar Q_{t}} )\cdot ( \bar Q(t)-P(\bar Q(t)))  \notag\\
&= 
\frac{1}{t+1}\Big[ (R(d({t+1}),a({t+1})) - P(t) )\cdot ( \bar Q(t)-P(t)) - ({\bar Q({t})}-P(t))\cdot
(\bar Q(t)-P(t))  \Big] \label{Blackwell:sub1}
\end{align}
Substituting \eqref{Blackwell:sub1} into \eqref{Blackwell:parallellogram1} and taking expectations, we have
\begin{align}\label{Bwellmid}
\mathbb E\big[ 
&  ||\bar Q({t+1})- P({t+1})||^2 | \mathcal F_{t}\big] \notag \\
\leq &
2\frac{R^2_{\max}}{(t+1)^2}
+ \left( 1- \frac{2}{t+1} \right) || \bar Q(t) - P(t) ||^2 
+
\frac{2}{t+1}
(R(\bar d({t+1}),\bar a({t+1}))-P(t) )\cdot (\bar Q(t)-P(t))
\notag
\\
\leq & 2 \frac{R^2_{\max}}{(t+1)^2} + \left( 1- \frac{2}{t+1} \right) || \bar Q(t) - P(\bar Q(t)) ||^2
\end{align}
where in the 1st inequality above we note that
\begin{align*}
  \mathbb E [ R(d(t+1) , a(t+1)) | \mathcal F_t ] = R(\bar d(t+1) , \bar a(t+1) ) 
\end{align*}
and in the 2nd inequality above we use the fact that 
\begin{align*}
(R(\bar d({t+1}),\bar a({t+1}))-P(t) )\cdot (\bar Q(t)-P(t)) \leq 0
\end{align*}
which holds by our assumption on $d(t)$, \eqref{dRa}.

Multiplying both sides of \eqref{Bwellmid} by $(t+1)^2$ and rearranging gives
\begin{equation*}
(t+1)^2 \mathbb E [ ||\bar Q({t+1})- P({t+1})||]^2 - t^2 \mathbb E [||\bar Q({t})- P({t})||^2 ] \leq 2 R^2_{\max} -  \mathbb E [|| \bar Q(t) - P(t) ||^2]
\leq 2 R_{\max}^2.
\end{equation*}
Summing these interpolating terms gives 
\begin{equation*}
t^2 \mathbb E [ || \bar Q(t) - P(t) ||^2] \leq 2R^2_{\max} t  \, .
\end{equation*}
Thus, as required,
\begin{equation}\label{blackwell: conv}
d(\bar Q(t) , \mathcal Z) = \sqrt{ \mathbb E [|| \bar Q(t) - P(t) ||^2]} \leq R_{\max}\sqrt{\frac{2}{t}}\xrightarrow[t\rightarrow\infty]{} 0.
\end{equation}
\Endproof

\subsection{Hannan Gaddum Theorem}

\begin{theorem}[Hannan-Gaddum Theorem]
There exists a playing strategy $\{d_t\}_{t=1}^\infty$ such that for any $\{a_t\}_{t=1}^\infty$
\begin{equation}\label{blackwell:regret}
\limsup_{T \rightarrow \infty}\; \mathbb E \left[ \frac{\mathcal R\!g(T)}{T} \right] \leq 0. 
\end{equation}
In other words, our performance in the game is asymptotically as good as the best fixed action.
\end{theorem}
The proof follows as a consequence of Blackwell's Approachability Theorem.
\Beginproof
We define the vector payoff
${R}(d,a)=(r(i,a)-r(d,a)\; :\; i=1,...,p)$ and convex region ${\mathcal Z}=\{R: R_i\leq 0, \forall i=1,...,p\}$.
For all $a$ there exists $d$ such that component-wise ${r}(d,a)\leq 0$, in particular we choose
$d$ to be the probability distribution with $d_{i^*}=1$ where $i^*=\argmax_{i=1,...,p} r(i,a)$. This verifies Condition 2 of Blackwell's
Approachability Theorem. Thus there exists a strategy $\{d(t)\}_{t=1}^\infty$ such that for all
$\{a(t)\}_{t=1}^\infty$
\begin{equation*}
D({\mathcal R \!g(T)}/{T} , \mathcal Z) 
\xrightarrow[ T\rightarrow\infty]{}
 0\, .
\end{equation*}
As a consequence, we can analyse the expected regret:
\begin{align}
  \mathbb E \left[ \frac{\mathcal R\! g(T)}{T} \right] 
&=
\mathbb E \left[ 
\max_{i=1,...,p}\sum_{t=1}^T r(i,a(t)) 
-
\frac{1}{T}   \sum_{t=1}^T r(d(t),a(t))  
\right]
\label{RGeq1}
\\
& \leq  
\mathbb E \left[ 
\sum_{i=1}^m \frac{1}{T} \sum_{t=1}^T \big| r(i,a(t) ) - r(d(t),a(t))  \big|_+
\right]
\label{RGeq2}
\\
& 
\leq 
\mathbb E \left[ 
\sum_{i=1}^m \frac{1}{T} \sum_{t=1}^T \big( r(i,a(t)) - r(d(t),a(t))  \big)^2_+
\right]^{\frac{1}{2}}
\notag
\\
&= D({\mathcal R \!g(T)}/{T} , \mathcal Z) 
\xrightarrow[ T\rightarrow\infty]{}
 0\, .\notag
\end{align}
For first inequality above, we note that maximizing $i$ in \eqref{RGeq1} in included in the sum in \eqref{RGeq2}. The second inequality follows from Jensen's inequality. 
\Endproof

\subsection{Proof of MaxWeight Stability}

\begin{theorem}
Given $a(t), t\in\mathbb Z_+$ are independent identically distributed with mean $\bar a$ then
\begin{enumerate}[i)]
	\item If $\bar{a}\notin <
{\mathcal S} >$ then, regardless of the policy used, $(Q_j(t) : j=1,...,q)$ is transient.
\item  If $\bar{a}\in <
{\mathcal S} >^\circ$ then, under the MaxWeight policy, $(Q_j(t) : j=1,...,q)$ is positive recurrent.
\end{enumerate}
\end{theorem}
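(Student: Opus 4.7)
The plan is to prove the two parts by separate arguments: a separating hyperplane / submartingale argument for transience, and a quadratic Lyapunov drift argument for positive recurrence.

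For part (i), I would exploit that $\langle \mathcal S \rangle$ is closed, convex, and, by monotonicity of $\mathcal S$, downward closed. If $\bar a \notin \langle \mathcal S \rangle$, then by the separating hyperplane theorem there exist $w \in \mathbb R^q$ and $c \in \mathbb R$ with $w \cdot \bar a > c \geq w \cdot \sigma$ for all $\sigma \in \langle \mathcal S \rangle$. The downward-closedness of $\langle \mathcal S \rangle$ forces $w \geq 0$ componentwise (otherwise we could drive some $\sigma_j$ to $-\infty$ along an unbounded direction of $\langle \mathcal S \rangle$ while staying in the set). Consider $M(t) = w \cdot Q(t)$. Since $d(t) \in \mathcal S \subset \langle \mathcal S \rangle$ for \emph{any} policy, $\mathbb E[M(t+1) - M(t) \mid \mathcal F_t] = w \cdot \bar a - w \cdot \mathbb E[d(t) \mid \mathcal F_t] \geq \delta > 0$ for $\delta := w \cdot \bar a - c$. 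Because the per-step increments are bounded (since $a(t)$ and $d(t)$ are), a standard law-of-large-numbers / Azuma-type argument shows $M(t)/t \to \delta > 0$ almost surely, so $\|Q(t)\| \to \infty$, which establishes transience of the chain.

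For part (ii), I would use the quadratic Lyapunov function $V(Q) = \tfrac{1}{2} \sum_j Q_j^2$. Expanding the dynamics \eqref{Qdynamic}, we obtain, after taking conditional expectations given $\mathcal F_t$,
\begin{equation*}
\mathbb E[V(Q(t+1)) - V(Q(t)) \mid \mathcal F_t] = Q(t) \cdot \bar a - Q(t) \cdot \mathbb E[d(t) \mid \mathcal F_t] + C,
\end{equation*}
where $C$ is a constant arising from $\tfrac{1}{2}\mathbb E\|a(t) - d(t)\|^2$, bounded by the assumption that $a$ and $d$ are bounded. Under MaxWeight, $d(t)$ maximizes $Q(t) \cdot \sigma$ over $\sigma \in \mathcal S$ with $\sigma \leq Q(t)$; by monotonicity of $\mathcal S$, the $\sigma \leq Q(t)$ constraint can only ever be binding on components with $Q_j(t)$ small, and the loss from enforcing it contributes at most a bounded additive term to the drift. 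Thus, absorbing this into $C$, we get $Q(t) \cdot \mathbb E[d(t) \mid \mathcal F_t] \geq \max_{\sigma \in \langle \mathcal S \rangle} Q(t) \cdot \sigma - C'$.

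Now I would use the hypothesis $\bar a \in \langle \mathcal S \rangle^\circ$ to choose $\epsilon > 0$ with $\bar a + \epsilon \mathbf 1 \in \langle \mathcal S \rangle$, yielding
\begin{equation*}
\max_{\sigma \in \langle \mathcal S \rangle} Q(t) \cdot \sigma \geq Q(t) \cdot (\bar a + \epsilon \mathbf 1) = Q(t) \cdot \bar a + \epsilon \sum_j Q_j(t).
\end{equation*}
Substituting back gives $\mathbb E[V(Q(t+1)) - V(Q(t)) \mid \mathcal F_t] \leq -\epsilon \sum_j Q_j(t) + C''$, which is bounded above by a negative constant outside a finite set $\{Q : \sum_j Q_j \leq C''/\epsilon + 1\}$. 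Foster's criterion then yields positive recurrence of the Markov chain $(Q(t))_{t \in \mathbb Z_+}$. The step I expect to require the most care is reconciling the $\sigma \leq Q(t)$ constraint in MaxWeight \eqref{MW} with the unconstrained optimization over $\langle \mathcal S \rangle$ used to obtain the $\epsilon$-drift bound; the monotonicity of $\mathcal S$ and the fact that the discrepancy is bounded (and hence absorbable into the additive constant) is the key technical observation that makes the proof go through cleanly.
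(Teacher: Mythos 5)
Your proposal is correct and rests on the same two pillars as the paper's proof: a separating-hyperplane drift argument for part (i) and a quadratic Lyapunov (drift) argument with the slack $\bar a + \epsilon\mathbf{1} \in \langle\mathcal S\rangle$ for part (ii). The differences are in the finishing steps rather than the core idea. For the $\epsilon$-gap, the paper deliberately routes through the inequality $\max_{\sigma}\min_{q} q\cdot(\sigma-\bar a) \leq \min_{q}\max_{\sigma} q\cdot(\sigma-\bar a)$ so as to exhibit the parallel with the Minimax step in Blackwell's approachability proof, whereas you use the direct bound $\max_{\sigma\in\langle\mathcal S\rangle} Q\cdot\sigma \geq Q\cdot\bar a + \epsilon\sum_j Q_j$, which is the classical Tassiulas--Ephremides argument; both yield the same drift inequality. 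To conclude, the paper sums the drift, obtains the Ces\`aro bound $\limsup_T \frac{1}{T}\sum_t \mathbb E[Q^\Sigma(t)] \leq c/(2\hat\epsilon)$ and reads off positive recurrence somewhat informally, while you invoke Foster's criterion directly from the one-step drift, which is the cleaner and more standard closing move; you also explicitly handle the $\sigma\leq Q(t)$ constraint in \eqref{MW} (truncating the unconstrained maximizer and using monotonicity costs at most $q\,d_{\max}^2$), a point the paper glosses over. Two minor blemishes in part (i): your justification that the separating normal $w$ can be taken nonnegative (``drive $\sigma_j$ to $-\infty$ along an unbounded direction'') does not apply because $\langle\mathcal S\rangle$ is bounded; either take $w=\bar a - p$ with $p$ the projection of $\bar a$ onto the downward-closed set $\langle\mathcal S\rangle$, or simply note that nonnegativity of $w$ is not needed, since $w\cdot Q(t)\to\infty$ already forces $\|Q(t)\|\to\infty$. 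Your Azuma/martingale-LLN step to get almost-sure linear growth is in fact more careful than the paper, which only notes that $\mathbb E[\hat n\cdot(Q(t)-Q(0))]$ grows linearly.
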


\Beginproof
For Part i), we note that given $\bar a \notin <\mathcal S>$ then there is a hyperplane separating $\bar a$ from $<\mathcal S>$. We let $\hat n\in  \mathbb Z_+$ be the normal vector of this hyperplane. Since this hyperplane is separating there exists an $\hat \epsilon$ such that 
\begin{align*}
  \hat n \cdot ( d - \bar a ) \geq \hat \epsilon
\end{align*}
for all $d\in\mathcal S$. Thus we see that irrespective of the policy used it must hold that $\mathbb E [\hat n \cdot (Q(t) - Q(0))] \geq \hat \epsilon t $, which clearly diverges as $t\rightarrow \infty$.  This proves Part i).  

We now focus on Part ii). Here the crux of this argument is that for all queue sizes $q$, $q\cdot a$ is by some
margin strictly less than the MaxWeight choice $\max_{\sigma\in <{\mathcal S}>} q\cdot \sigma$. Given
$\epsilon$ above,
we aim to use  $\hat{\epsilon}$, the biggest such that $\bar{a}(t)+\hat{\epsilon}\mathbf{1} \in <
{\mathcal S} >$ for all time. This can be defined as the biggest $\hat{\epsilon}$ such that
$\hat{\epsilon}\leq \epsilon(t):= \max_{\sigma\in<{\mathcal S}>} \min_{j\in{\mathcal J}} (\sigma_j - \bar{a}_j(t))$
for all $t\in{\mathbb Z}_+$ i.e. so the distance of the smallest component to the boundary is maximized.
Given this optimization description we observe the following
\begin{equation}
 \hat{\epsilon}\leq \max_{\sigma\in{\mathcal S}} \min_{j\in{\mathcal J}} (\sigma_j - \bar{a}_j(t)) = \max_{\sigma\in<{\mathcal S}>} \min_{q:
\sum_{j\in{\mathcal J}} q_j=1} q\cdot (\sigma - \bar{a}(t))\leq \min_{q:
\sum_{j\in{\mathcal J}} q_j=1} \max_{\sigma\in<{\mathcal S}>} q\cdot (\sigma - \bar{a}(t)). \footnote{We inequality
in the expression holds with equality by the Minimax Theorem; however, we only
require this easier-to-prove inequality.}
\end{equation}
From this observation, we see one way that we can use $\hat{\epsilon}$ to bound the gap between
$q\cdot \bar{a}(t)$ and $\max_{q\in<{\mathcal S}>} q\cdot \sigma$.
\begin{align*}
& {\mathbb E} ||Q(t+1)||^2 - {\mathbb E} ||Q(t)||^2  \\ 
& =  2 {\mathbb E}\left[ Q(t)\cdot (a(t+1)-\sigma(t+1)) \right] +{\mathbb E} ||
a(t+1)-\sigma(t+1)||^2\\
&=   2{\mathbb E}\left[ Q^\Sigma(t) \frac{Q(t)}{Q^\Sigma(t)}\cdot (\bar{a}(t+1)-\sigma(t+1))\right]  
+ 2{\mathbb E}\left[Q(t)\cdot (a(t+1)-\bar{a}(t+1))\right]
+ c\\
%
%
& \leq  - 2{\mathbb E} \left[ Q^\Sigma(t) \right]\min_{q: \sum_{j} q_j=1} \max_{\sigma\in <{\mathcal S}>} q\cdot
(\sigma-\bar{a}(t+1)) + c\\
&\leq  - 2 {\mathbb E} \left[ Q^\Sigma(t) \right] \max_{\sigma\in <{\mathcal S}>} \min_{q: \sum_{j}
q_j=1} q\cdot (\sigma-\bar{a}(t+1)) + c\\
&\leq - 2\hat{\epsilon} {\mathbb E} \left[ Q^\Sigma(t) \right] + c.
\end{align*}
Summing the above expression from $t=0,...,T-1$, we gain the expression
\begin{align*}
- {\mathbb E} ||Q(0)||^2 \leq {\mathbb E} ||Q(T)||^2 - {\mathbb E} ||Q(0)||^2 \leq -2 \hat{\epsilon}  {\mathbb E} \left[
\sum_{t=0}^{T-1} Q^\Sigma(t) \right] + cT.
\end{align*}
Rearranging, dividing by $T$, and letting $T\rightarrow\infty$, we obtain the expression:
\begin{equation}
 \limsup_{T\rightarrow\infty}{\mathbb E}\left[ \frac{1}{T}\sum_{t=0}^{T-1} Q^\Sigma(t) \right] \leq 
\frac{c}{2\hat{\epsilon}}.\label{MW:ineq}
\end{equation}
which can only hold if the set of states $\{ Q^\Sigma(t) \leq \frac{c}{2\hat{\epsilon}} \}$ is positive recurrent. 
\Endproof

\subsection{Regret and Service in a Queue}

\begin{lemma} If $t_0$ is the last time before $t$ where the queue for policy $\pi$ is empty, i.e. $W^{\pi}_{t_0} = 0$, then
\begin{align}\label{W:regret}
  W^{\pi}_{t+1} -  W^{\star}_{t+1} \leq  \mathcal R\! g(t) - \mathcal R\! g(t_0-1) \, .
\end{align}

\end{lemma}
\Beginproof
	Note that $Q^\pi_t \geq0 $ for $t\geq t_0$. Thus under the Lindley recursion it holds that
\begin{align}\label{W:pi}
  W_{t+1}^\pi & = W^\pi_t + \tau_t^\pi - a_t
=
...
= \sum_{s=t_0}^t (\tau^\pi_s - a_s) \, .
\end{align}
Similarly repeated substitution for $W^\pi_t$ gives
\begin{align*}
  W^\star_{t+1} = 0 \vee \left\{ W_t^\star + \tau^\star_t - a_t \right\}
=
...
=
\bigvee_{t'=0}^t \left\{ 
\sum_{s=t-t'}^t
	(\tau^\star_s - a_s )
\right\} 
\end{align*}
The maximum above is less than the value attained at $t'=t-t_0$ and so
\begin{align}\label{W:star}
  W^\star_{t+1} \geq \sum^t_{s=t_0}(\tau^\star_s - a_s )
\end{align}
Combining the inequalities \eqref{W:pi} and \eqref{W:star} then gives
\begin{align*}
  W_{t+1}^\pi - W_{t+1}^\star \leq \sum_{s=t_0}^t (\tau^\pi_s - \tau^\star_s) = \mathcal R\! g(t) - \mathcal R\! g(t_0-1)
\end{align*}
as required.
\Endproof

\begin{proposition}[OCO bound]
	 For any function $\nabla l_t(w)$ and $w_t$, as above, then for all $w \in \mathbb R^p$
	 \[
	 \sum_{t=1}^{T} l_t(w_t) - \sum_{t=1}^{T}  l_t(w)
	 \leq 
\frac{|| w_1 -w ||}{\alpha_1}+
	 \sum_{t=1}^{T} \frac{||w_t - w ||^2}{2}
	 	\left(\frac{1}{\alpha_t}-\frac{1}{\alpha_{t+1}}\right)
	 	+
	 	\sum_{t=1}^{T} \frac{\alpha_t}{2} || \nabla l_t(w_t) ||^2 \, .
	 \]
\end{proposition}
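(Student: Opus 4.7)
The plan is to establish the bound using the three classical ingredients that underpin most online gradient descent regret analyses: convexity of the losses, a ``one-step progress'' identity obtained by expanding $\|w_{t+1}-w\|^2$, and a summation-by-parts to convert a telescoping-like quantity with time-varying weights $1/\alpha_t$ into the form stated in the proposition. I will treat $w$ as an arbitrary fixed comparator and use the online gradient descent update $w_{t+1}= w_t - \alpha_t \nabla l_t(w_t)$ (which specialises, at constant $\alpha$, to the perceptron update written above).

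First, I would invoke convexity of each $l_t$ (which holds for the hinge loss used here) to obtain the per-round inequality
\[
l_t(w_t) - l_t(w) \;\leq\; \langle \nabla l_t(w_t),\; w_t - w\rangle.
\]
Second, I would rewrite the inner product on the right-hand side by exploiting the update rule. Expanding the squared Euclidean norm gives
\[
\|w_{t+1}-w\|^2 \;=\; \|w_t-w\|^2 \;-\; 2\alpha_t \langle \nabla l_t(w_t), w_t-w\rangle \;+\; \alpha_t^2\, \|\nabla l_t(w_t)\|^2,
\]
and rearranging yields the key ``progress'' identity
\[
\langle \nabla l_t(w_t),\; w_t - w\rangle \;=\; \frac{\|w_t - w\|^2 - \|w_{t+1}-w\|^2}{2\alpha_t} \;+\; \frac{\alpha_t}{2}\, \|\nabla l_t(w_t)\|^2.
\]
Combining with the convexity bound and summing from $t=1$ to $T$ already produces the gradient-norm contribution $\sum_t \tfrac{\alpha_t}{2}\|\nabla l_t(w_t)\|^2$ directly.

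The remaining work is to deal with the first sum, $\sum_{t=1}^{T} \tfrac{1}{2\alpha_t}\bigl(\|w_t-w\|^2 - \|w_{t+1}-w\|^2\bigr)$, which does not telescope because the weights $1/\alpha_t$ depend on $t$. Here I would apply Abel summation (summation by parts): writing $A_t := \|w_t - w\|^2$ and $c_t := 1/(2\alpha_t)$,
\[
\sum_{t=1}^{T} c_t (A_t - A_{t+1}) \;=\; c_1 A_1 \;+\; \sum_{t=2}^{T} A_t\,(c_t - c_{t-1}) \;-\; c_T A_{T+1}.
\]
Dropping the non-positive boundary term $-c_T A_{T+1}$ and reindexing produces exactly the combination $\tfrac{\|w_1-w\|^2}{2\alpha_1} + \sum_t \tfrac{\|w_t-w\|^2}{2}\bigl(\tfrac{1}{\alpha_t}-\tfrac{1}{\alpha_{t+1}}\bigr)$ claimed in the statement (up to the index convention for the Abel shift).

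The hard part is essentially the bookkeeping in this Abel-summation step: ensuring the correct pairing of $\|w_t-w\|^2$ with $(1/\alpha_t - 1/\alpha_{t\pm 1})$, handling the boundary terms at $t=1$ and $t=T$, and confirming that the dropped term has the right sign (which it does when $\alpha_t$ is non-increasing, the standard choice). No deep machinery beyond convexity and this algebraic manipulation is required; the entire proof is essentially three lines plus the telescoping identity, and it is exactly the template that underlies the Perceptron Mistake Bound that follows (where one takes $w = w^\star$, observes that $\nabla l_t(w_t)\ne 0$ only at mistake rounds, and combines with the margin condition $y_t\langle x_t, w^\star\rangle \geq 1$).
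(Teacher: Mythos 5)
Your argument is correct and follows essentially the same route as the paper's proof: convexity of $l_t$, the one-step expansion of $\|w_{t+1}-w\|^2$ under the update $w_{t+1}=w_t-\alpha_t\nabla l_t(w_t)$, and a summation-by-parts (the paper's ``rearranging the order of summation,'' with the convention $1/\alpha_0:=0$) followed by dropping the non-positive term $-\tfrac{1}{2\alpha_T}\|w_{T+1}-w\|^2$. The only differences are cosmetic -- you apply convexity first rather than last, and the index placement of the $(1/\alpha_t-1/\alpha_{t\pm1})$ weights differs by the same harmless shift already present between the paper's statement and its own derivation.
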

\Beginproof
	For any $w\in \mathbb R^p$,
	\begin{align*}
		|| w_{t+1}- w ||^2 
		&
		=
		\left|\left| 
			 w_t - \alpha_t \nabla l_t(w_t)  
			-
			w
		\right|\right|^2
		\\
		&
		\leq
		|| w_t - w ||^2 
		+ 
		\alpha_t^2 || \nabla l_t(w_t) ||^2
		-
		2 \alpha_t \left( w_t - w \right) \cdot \nabla l_t(w_t)\, .
	\end{align*}
Rearranging and summing gives
\begin{align}
&
	\sum_{t=1}^{T} \left( w_t - w \right) \cdot \nabla l_t(w_t)
\notag	
\\
	\leq 
	&
	\sum_{t=1}^{T} 
		\left[		
			\frac{1}{2\alpha_t} || w_t - w ||^2 
			-\frac{1}{2\alpha_t} || w_{t+1}- w ||^2 
		\right]
	+
	\sum_{t=1}^{T}
		\frac{\alpha_t}{2} 
		|| \nabla l_t (w_t) ||^2
\notag	\\
	=
	&
	- 
	\frac{1}{2\alpha_T} || w_{T+1}- w ||^2
	+ 
	\sum_{t=1}^{T}
		||w_t - w ||^2
		\left[
			\frac{1}{2\alpha_t} - \frac{1}{2\alpha_{t-1}}
		\right]
		+
	\sum_{t=1}^{T}
	\frac{\alpha_t}{2} 
	|| \nabla l_t (w_t) ||^2 
\notag	
	\\
	\leq 
	&
	\sum_{t=1}^{T}
	||w_t - w ||^2
	\left[
	\frac{1}{\alpha_t} - \frac{1}{\alpha_{t-1}}
	\right]
	+
	\sum_{t=1}^{T}
	\frac{\alpha_t}{2} 
	|| \nabla l_t (w_t) ||^2 \, .
\label{LGrad}
\end{align}
In the equality above we rearrange the order of summation in the square brackets (wlog. we take $||\theta_1 - \theta||^2/\alpha_0 := 0$). 
By convexity
\begin{align}\label{LConv}
  \sum_{t=1}^T l_t(w_t) - l_t(w) \leq \sum_{t=1}^{T} \left( w_t - w \right) \cdot \nabla l_t(w_t)
\end{align}
The inequalities \eqref{LGrad} and \eqref{LConv} gives the required bound.
\Endproof

\begin{theorem}[Perceptron Mistake Bound]
	If there exists $w^\star \in \mathbb R^p$ such that $y^{(t)}\langle x^{(t)},w^{\star} \rangle  \geq 1$ then the perceptron algorithm is such that 
\begin{align*}
  \sum_{t=1}^\infty \hat l_t(w_t)   \leq {D}^2 ||w^\star||^2\, .
\end{align*}
\end{theorem}
\Beginproof
For $w^\star$ as stated, it holds that $\sum_{t=1}^{T}  l_t(w^\star) = 0$. Further, as discussed above, the loss $\hat l_t$ is less than the hinge loss $l_t$. With this and Proposition \ref{OCO:Lem2}, it holds that
\begin{align}
  \sum_{t=1}^T \hat l_t(w_t)	
&
\leq \sum_{t=1}^{T} l_t(w_t) - \sum_{t=1}^{T}  l_t(w^\star)
\notag
\\
&
=
\frac{|| w^\star ||^2}{2\alpha}+
	 \sum_{t=1}^{T} \frac{||w_t - w^\star ||^2}{2}
	 	\left(\frac{1}{\alpha}-\frac{1}{\alpha}\right)
	 	+
	 	\sum_{t=1}^{T} \frac{\alpha}{2} || \nabla l_t(w_t) ||^2 \, 
\notag
\\
&
=
\frac{|| w^\star ||^2}{2\alpha}
+
\frac{\alpha}{2} \sum_{t=1}^T ||y_t x_t ||^2 \hat l_t(w_t) 
\notag
\\
&
\leq 
\frac{|| w^\star ||^2}{2\alpha} + \frac{ \alpha}{2} D^2  \sum_{t=1}^T \hat l_t(w_t) 
\label{wbound}
\end{align}
Notice that the number of mistakes is determined by the sign of $\langle w,x \rangle $ not the magnitude of $w$. This the $\sum_{t=1}^T \hat l_t(w)$ does not depend on $\alpha$. Optimizing over $\alpha$ in the bound \eqref{wbound} gives choice $\alpha = ||w^\star||/(D\sqrt{\sum_{t=1}^T \hat l_t})$, which applying to the above inequality and rearranging yields:
\begin{align*}
  \sum_{t=1}^T \hat l_t(w_t) 
\leq ||w^\star||^2 D^2 \, .
\end{align*}
We can now let $T\rightarrow \infty$ for the required bound.
\Endproof

\subsubsection*{Acknowledgement.}
The authors are grateful to Yash Kanoria making a number of helpful pointers and corrections to the results in Section 2. We are also thankful to two anonymous referees for their careful reading and their suggestions that have helped the structure and readability of the text. 

\bibliographystyle{plain}  
\bibliography{mybibfile,kx}     

\end{document}